\documentclass{article}




\usepackage[final]{neurips_2021}


\usepackage[utf8]{inputenc} 
\usepackage[T1]{fontenc}    
\usepackage{hyperref}       
\usepackage{url}            
\usepackage{booktabs}       
\usepackage{amsfonts}       
\usepackage{nicefrac}       
\usepackage{microtype}      
\usepackage{xcolor}         

\usepackage{amsfonts,amssymb,amsmath,amsthm,mathtools}
\usepackage[capitalise,nameinlink]{cleveref}
\usepackage{algorithmic}
\usepackage{algorithm}
\usepackage{graphicx,wrapfig}

\newtheorem{theorem}{Theorem}
\newtheorem{lemma}[theorem]{Lemma}
\newtheorem{corollary}[theorem]{Corollary}
\theoremstyle{definition}

\newtheorem{definition}{Definition}

\newtheorem*{lemma*}{Lemma}
\newtheorem*{theorem*}{Theorem}
\newtheorem*{remark*}{Remark}

\DeclareMathOperator*{\argmax}{arg\,max}
\newcommand{\eqdef}{\stackrel{\text{def}}{=}}
\newcommand{\bbE}{\mathbb{E}}
\newcommand{\bbR}{\mathbb{R}}

\newcommand{\wt}{\widetilde}

\newcommand{\tO}{\wt O}

\newcommand{\tpi}{\tilde \pi}
\newcommand{\ind}{\mathbb{I}}
\newcommand{\indevent}[1]{\ind \{ #1 \}}

\newcommand{\regret}{\text{Reg}_T (M)}

\newcommand{\statenumfactors}{d}
\newcommand{\stateactionnumfactors}{n}
\newcommand{\rewardnumfactors}{\ell}
\newcommand{\scopesize}{m}
\newcommand{\statefactorsize}{W}
\newcommand{\stateactionfactorsize}{L}
\newcommand{\numvisitsin}[2]{\nu^{#1}_{#2}}
\newcommand{\numvisitsbefore}[2]{N^{#1}_{#2}}
\newcommand{\consscopes}[2]{\wt {\cal Z}^{#1}_{#2}}
\newcommand{\rewardconsscopes}[2]{\wt {\cal R}^{#1}_{#2}}
\newcommand{\episodestarttime}[1]{t_{#1}}

\newcommand{\optM}[1]{\wt M^{#1}}
\newcommand{\optlambda}[1]{\lambda^{#1}}
\newcommand{\confrad}{\mathcal{W}}

\title{Oracle-Efficient Regret Minimization in Factored MDPs with Unknown Structure}

%

\author{%
  Aviv Rosenberg \\
  Tel-Aviv University\\
  \texttt{avivros007@gmail.com} \\
  \And
  Yishay Mansour \\
  Tel-Aviv University and Google Research, Tel Aviv\\
  \texttt{mansour@tau.ac.il} \\
}

\begin{document}

\maketitle

\begin{abstract}
    We study regret minimization in non-episodic factored Markov decision processes (FMDPs), where all existing algorithms make the strong assumption that the factored structure of the FMDP is known to the learner in advance.
    In this paper, we provide the first algorithm that learns the structure of the FMDP while minimizing the regret.
    Our algorithm is based on the optimism in face of uncertainty principle, combined with a simple statistical method for structure learning, and can be implemented efficiently given oracle-access to an FMDP planner.
    Moreover, we give a variant of our algorithm that remains efficient even when the oracle is limited to non-factored actions, which is the case with almost all existing approximate planners.
    Finally, we leverage our techniques to prove a novel lower bound for the known structure case, closing the gap to the regret bound of Chen et al. [2021].
\end{abstract}

\section{Introduction}

Reinforcement learning (RL) considers an agent interacting with an unknown stochastic environment with the aim of maximizing its expected cumulative reward.
This is usually modeled by a Markov decision process (MDP) with a finite number of states.
The vast majority of provably-efficient RL has focused on the tabular case, where the state space is assumed to be small.
Starting with the UCRL2 algorithm \citep{AuerUCRL}, near-optimal regret bounds were proved \citep{azar2017minimax,fruit2018efficient,jin2018q,zanette2019tighter,efroni2019tight}.
Unfortunately, many real-world RL applications involve problems with a huge state space, yielding the tabular MDP model impractical as it requires the regret to unavoidably scale polynomially with the number of states.

In many practical scenarios, prior knowledge about the environment can be leveraged in order to develop more efficient algorithms.
A popular way to model additional knowledge about the structure of the environment is by \emph{factored MDPs} (FMDPs; \citet{boutilier1995exploiting,boutilier1999decision}).
The state of an FMDP is composed of $\statenumfactors$ components, called \emph{factors}, and each component is determined by only $\scopesize$ other factors, called its \emph{scope}.
FMDPs arise naturally in many applications like games, robotics, image-based applications and production lines (where only neighbouring machines affect one another).
The common property of all these examples is the huge state space exponential in $\statenumfactors$, but the very small scope size $\scopesize$ (e.g., in images each pixel is a factor and it depends only on neighboring pixels).

The key benefit of FMDPs is the combinatorial state space that allows compact representation.
That is, although the number of states is exponential in $\statenumfactors$, the FMDP representation is only exponential in $\scopesize$ (which is much smaller) and polynomial in $\statenumfactors$.
Early works \citep{kearns1999efficient,guestrin2002algorithm,strehl2007model,szita2009optimistic}
show that FMDPs also reduce the sample complexity exponentially, thus avoiding polynomial dependence on the number of states.
Recently, this was further extended to algorithms with near-optimal regret bounds \citep{osband2014near,xu2020near,tian2020towards,chen2020efficient,talebi2020improved}.
However, all these works make the strong assumption that the underlying FMDP structure is fully known to the learner in advance.

In this paper we provide the first regret minimization algorithm for FMDPs with unknown structure, thus solving an open problem from \citet{osband2014near}.
Our algorithm is built on a novel concept of \emph{consistent scopes} and guarantees \textbf{near-optimal $\sqrt{T}$ regret} that scales polynomially with the FMDP encoding and is therefore exponentially smaller than the number of states (and the regret in tabular MDPs).
Moreover, our algorithm features an innovative construction that can incorporate elimination of inconsistent scopes into the optimistic regret minimization framework, while maintaining computational efficiency given oracle-access to an FMDP planner.
Keeping computational oracle-efficiency is a difficult challenge in factored MDPs and especially hard when structure is unknown, since the number of possible structure configurations is highly exponential.
Furthermore, our algorithm easily accommodates any level of structure knowledge, and is therefore extremely useful when additional prior domain knowledge is available.
We note that while structure learning in FMDPs was previously studied by \citet{strehl2007efficient,diuk2009adaptive,chakraborty2011structure,hallak2015off,guo2017sample}, none of them provide regret guarantees.

To make our algorithms compatible with existing approximate FMDP planners, we also study FMDPs with non-factored actions.
To the best of our knowledge, existing planners require small non-factored action space which is not compatible with the FMDP regret minimization literature.
To mitigate this gap, we show that even when the oracle is limited to non-factored actions, a variant of our algorithm can still be implemented efficiently and achieve similar near-optimal regret bounds.

Finally, we leverage the techniques presented in this paper to prove a novel lower bound for regret minimization in FMDPs with known structure.
This is the first lower bound to show that the regret must scale exponentially with the scope size $\scopesize$, and the first to utilize connections between different factors in a non-trivial way (i.e., with scope size larger than $1$).
Furthermore, it improves previous lower bounds by a factor of $\sqrt{d}$ and closes the gap to the state-of-the-art regret bound of \citet{chen2020efficient}, thus establishing the minimax optimal regret in this setting.

Our algorithms make oracle use of FMDP planners.
However, even where an FMDP can be represented concisely, solving for the optimal policy may take exponentially long in the most general case \citep{goldsmith1997complexity,littman1997probabilistic}.
Our focus in this paper is upon the
statistical aspect of the learning problem, and we therefore assume oracle-access to an FMDP planner.
We emphasize that the oracle assumption appears in all previous regret minimization algorithms.
Furthermore, except for the DORL algorithm of \citet{xu2020near}, all previous algorithms run in time exponential in $\statenumfactors$ even with access to a planning oracle (and known structure).
We stress that in many cases of interest, effective approximate planners do exist \citep{boutilier2000stochastic,koller2000policy,schuurmans2001direct,guestrin2001max,guestrin2003efficient,sanner2005approximate,delgado2011efficient}.

\section{Preliminaries}

An infinite-horizon average-reward MDP is described by a tuple $M = (S,A,P,R)$, where $S$ and $A$ are finite state and action spaces, respectively, $P: S \times A \to \Delta_S$ is the transition function\footnote{$\Delta_X$ denotes the set of distributions over a set $X$.}, and $R: S \times A \to \Delta_{[0,1]}$ is the reward function with expectation $r(s,a) = \bbE [R(s,a)]$.

The interaction between the MDP and the learner proceeds as follows.
The learner starts in an arbitrary initial state $s^1 \in S$.
For $t=1,2,\dots$, the learner observes the current state $s^t \in S$, picks an action $a^t \in A$ and earns a reward $r^t$ sampled from $R(s^t,a^t)$.
Then, the environment draws the next state $s^{t+1} \sim P(\cdot \mid s^t,a^t)$ and the process continues.

A policy $\pi: S \to A$ is a mapping from states to actions, and its \emph{gain} is defined by the average-reward criterion:
$
    \lambda (M,\pi,s)
    \eqdef
    \lim_{T \rightarrow \infty} \frac{1}{T} \bbE \biggl[ \sum_{t=1}^T r(s^t,\pi(s^t)) \mid s^1 = s \biggr],
$
where $s^{t+1} \sim P(\cdot \mid s^t,\pi(s^t))$.
In order to derive non-trivial regret bounds, one must constrain the connectivity of the MDP \citep{bartlett2009regal}.
We focus on \emph{communicating} MDPs, i.e., MDPs with finite \emph{diameter} $D < \infty$.

\begin{definition}
    Let $T(s' \mid M,\pi,s)$ be the random variable for the first time step in which state $s'$ is reached when playing a stationary policy $\pi$ in an MDP $M$ with initial state $s$.
    The diameter of $M$ is defined as
    $
        D(M)
        \eqdef
        \max_{s \ne s' \in S} \min_{\pi: S \rightarrow A} \bbE [T(s' \mid M,\pi,s)].
    $
\end{definition}

For communicating MDPs, neither the optimal policy nor its gain depend on the initial state $s^1$.
We denote them by $\pi^\star (M) = \argmax_{\pi: S \rightarrow A} \lambda (M,\pi,s^1)$ and $\lambda^\star (M) = \lambda(M,\pi^\star,s^1)$, respectively.
We measure the performance of the learner by the \emph{regret}.
That is, the difference between the expected gain of the optimal policy in $T$ steps and the cumulative reward obtained by the learner up to time $T$, i.e.,
$
    \regret
    \eqdef
    \sum_{t=1}^T \bigl( \lambda^\star(M) - r^t \bigr),
$
where $r^t \sim R(s^t,a^t)$ and $a^t$ is chosen by the learner.

\subsection{Factored MDPs}

Factored MDPs inherit the above definitions, but also possess some conditional independence structure that allows compact representation.
We follow the factored MDP definition of \citet{osband2014near}, which generalizes the original definition of \citet{boutilier2000stochastic,kearns1999efficient} to allow a factored action space as well.
We start with a definition of a factored set and scope operation.

\begin{definition}
    A set $X$ is called \emph{factored} if it can be written as a product of $\stateactionnumfactors$ sets $X_1,\dots,X_\stateactionnumfactors$, i.e., $X = X_1 \times \dots \times X_\stateactionnumfactors$.
    For any subset of indices $Z = \{ i_1,\dots,i_{|Z|} \} \subseteq \{ 1,\dots, \stateactionnumfactors \}$, define the scope set $X[Z] = X_{i_1} \times \dots \times X_{i_{|Z|}}$.
    Further, for any $x \in X$ define the scope variable $x[Z] \in X[Z]$ to be the value of the variables $x_i \in X_i$ with indices $i \in Z$.
    For singleton sets we write $x[i]$ for $x[\{ i \}]$.
\end{definition}

Next, we define the factored reward and transition functions.
We use the notations $X = S \times A$ for the state-action space, $\statenumfactors$ for the number of state factors and $\stateactionnumfactors$ for the number of state-action factors.

\begin{definition}
    \label{def:factored-reward}
    A reward function $R$ is called factored over $X = X_1 \times \dots \times X_\stateactionnumfactors$ with scopes $Z_1^r,\dots,Z_\rewardnumfactors^r$ if  there exist functions $\{ R_j : X[Z_j^r] \to \Delta_{[0,1]} \}_{j=1}^\rewardnumfactors$ with expectations $r_j(x[Z_j^r]) = \bbE [R_j(x[Z_j^r])]$ such that for all $x \in X$:
    $
        R(x) 
        = 
        \frac{1}{\rewardnumfactors} \sum_{j=1}^\rewardnumfactors R_j(x[Z_j^r]).
    $
    Note that when a reward $r = \frac{1}{\rewardnumfactors} \sum_{j=1}^\ell r_j$ is sampled from $R(x)$, the learner observes every $r_j$ individually.
\end{definition}

\begin{definition}
    A transition function $P$ is called factored over $X = X_1 \times \dots \times X_\stateactionnumfactors$ and $S = S_1 \times \dots \times S_\statenumfactors$ with scopes $Z_1^P,\dots,Z_\statenumfactors^P$ if there exist functions $\{ P_i: X[Z_i^P] \to \Delta_{S_i} \}_{i=1}^\statenumfactors$ such that for all $x \in X$ and $s' \in S$:
    $
        P(s' \mid x) 
        = 
        \prod_{i=1}^\statenumfactors P_i(s'[i] \mid x[Z_i^P]).
    $
    That is, given a state-action pair $x$, factor $i$ of $s'$ is independent of its other factors, and is determined only by $x[Z_i^P]$.
\end{definition}

Thus, a factored MDP (FMDP) is defined by an MDP whose reward and transition functions are both factored, and is fully characterized by the tuple
$
    M
    =
    \Bigl( \{ X_i \}_{i=1}^\stateactionnumfactors , \{ S_i , Z_i^P , P_i \}_{i=1}^\statenumfactors , \{ Z_j^r , R_j \}_{j=1}^\rewardnumfactors \Bigr).
$
As opposed to previous works \citep{osband2014near,xu2020near,tian2020towards,chen2020efficient} that assume known factorization, in this paper the learner does not have any prior knowledge of the scopes $Z_1^P,\dots,Z_\statenumfactors^P$ or $Z_1^r,\dots,Z_\rewardnumfactors^r$, and they need to be learned from experience.
However, the learner has a bound $\scopesize$ on the size of the scopes, i.e., $|Z_i^P| \le \scopesize$ and $|Z_j^r| \le \scopesize \  \forall i,j$.
See remarks on unknown scope size and variable scope sizes in \cref{sec:slf-ucrl-regret-proof}.

\begin{remark*}[FMDP encoding size]
    Let the action factorization $A = A_{\statenumfactors+1} \times \dots \times A_{\stateactionnumfactors}$, factor size $\statefactorsize = \max \{ \max_{1 \le i \le \statenumfactors} |S_i| , \max_{\statenumfactors+1 \le i \le \stateactionnumfactors} |A_i| \}$ and $\stateactionfactorsize = \max_{Z : |Z| = \scopesize} |X[Z]|$.
    The encoding size is $O(\statenumfactors \statefactorsize \stateactionfactorsize + \rewardnumfactors \stateactionfactorsize + (\statenumfactors+\rewardnumfactors) \scopesize \log \stateactionnumfactors)$.
    Importantly, the encoding is only polynomial in $\statenumfactors$ while the number of states $\statefactorsize^\statenumfactors$ is exponential.
    It is however exponential in the (much smaller) scope size as $\stateactionfactorsize \approx \statefactorsize^\scopesize$.
\end{remark*}

\section{Structure Learning in FMDPs}

In order to keep sample efficiency even when the structure of the FMDP is unknown, the learner must be able to detect the actual scopes $Z_1^P,\dots, Z_\statenumfactors^P$ and $Z_1^r,\dots,Z_\rewardnumfactors^r$.
Let's focus on learning the scopes for the transition function first, as the technique for the reward function is similar.
Our structure learning approach is based on a simple yet powerful observation by \citet{strehl2007efficient}.
Since the $i$-th factor of the next state depends only on the scope $Z_i^P$, an empirical estimate of $P_i$ should remain relatively similar whether it is computed using $Z_i^P$ or $Z_i^P \cup Z$ for any other scope $Z \subseteq \{1,\dots,\stateactionnumfactors \}$.

Formally, define the empirical transition function for factor $i$ based on scope $Z$ at time step $t$ as
$
    \bar P_{i,Z}^t(w \mid v) 
    = 
    \frac{\numvisitsbefore{t}{i,Z}(v,w)}{\max \{ \numvisitsbefore{t}{Z}(v),1 \}}
$
for every $(v,w) \in X[Z] \times S_i$,
where $\numvisitsbefore{t}{Z}(v)$ is the number of times we have visited a state-action pair $x$ such that $x[Z] = v$ up to time step $t$, and $\numvisitsbefore{t}{i,Z}(v,w)$ is the number of times this visit was followed by a transition to a state $s'$ such that $s'[i] = w$. 
Regardless of the additional scope $Z$, the expected value of $\bar P^t_{i , Z_i^P \cup Z} \bigl( s'[i] \mid x[Z_i^P \cup Z] \bigr)$ remains $P_i \bigl( s'[i] \mid x[Z_i^P] \bigr)$.

We leverage this observation to define \emph{consistent} scopes.
A scope $Z$ of size $\scopesize$ is consistent for factor $i$ if for every other scope $Z'$ of size $\scopesize$, $v \in X[Z \cup Z']$ and $w \in S_i$, 
\begin{align}
    \label{eq:consistent-def}
    \bigl| \bar P^t_{i , Z \cup Z'} \bigl( w | v \bigr) - \bar P^t_{i,Z} \bigl( w | v[Z] \bigr) \bigr| \le 2 \cdot \epsilon_{i,Z \cup Z'}^t(w | v),
\end{align}
where $\epsilon^t_{i,Z}(w \mid v) \eqdef \sqrt{\frac{18 \bar P^t_{i,Z}(w \mid v) \tau^t}{\max \{ \numvisitsbefore{t}{Z}(v) , 1 \}}} + \frac{18 \tau^t}{\max \{ \numvisitsbefore{t}{Z}(v) , 1\}}$ is the radius of the confidence set, $\tau^t = \log (6 \statenumfactors \statefactorsize \stateactionfactorsize t / \delta)$ is a logarithmic factor and $\delta$ is the confidence parameter.

There are two important properties that hold by a simple application of Hoeffding inequality.
First, the actual scope $Z_i^P$ will always be consistent with high probability.
Second, if a different scope $Z$ is consistent, then the empirical estimates $\bar P^t_{i,Z_i^P}$ and $\bar P^t_{i,Z}$ must be close, since both are close to $\bar P^t_{i,Z_i^P \cup Z} = \bar P^t_{i,Z \cup Z_i^P}$. 
Therefore, they are close to the true transition function $P_i$ with high probability.

Thus, our approach for structure learning is to eliminate inconsistent scopes.
In the next section we show how this idea can be combined with the method of \emph{optimism in face of uncertainty} for regret minimization in FMDPs.
This approach works similarly for learning the scopes of the reward function.
Formally, define the empirical reward function for reward factor $j$ based on scope $Z$ at time $t$ as
$
    \bar r_{j,Z}^t(v) 
    = 
    \frac{1}{\max \{ \numvisitsbefore{t}{Z}(v),1 \}} \sum_{h=1}^{t-1} r^h_j \cdot \indevent{(s^h,a^h)[Z] = v}
$
for every $v \in X[Z]$, where $\indevent{\cdot}$ is the indicator.
Similarly to the transitions, a scope $Z$ of size $\scopesize$ is \emph{reward consistent} for reward factor $j$ if for every other scope $Z'$ of size $\scopesize$ and $v \in X[Z \cup Z']$,
\begin{align*}
    \bigl| \bar r_{j,Z \cup Z'}^t(v) - \bar r_{j,Z}^t(v[Z]) \bigr| 
    \le 
    2 \cdot \epsilon_{Z \cup Z'}^t (v)
    \eqdef
    2 \cdot \sqrt{18 \tau^t / \max \{ \numvisitsbefore{t}{Z \cup Z'}(v),1 \}}.
\end{align*}

\section{The SLF-UCRL Algorithm}

Our algorithm Structure Learn Factored UCRL (SLF-UCRL) follows the known framework of optimism in face of uncertainty while learning the structure of the FMDP.
A sketch is given in \cref{alg:SLF-UCRL-main} and the full algorithm can be found in \cref{app:algs}.
Similarly to the UCRL2 algorithm \citep{AuerUCRL}, we split the time into episodes.
In the beginning of every episode we compute an optimistic policy and play it for the entire episode.
The episode ends once the number of visits to some $v \in X[Z \cup Z']$ is doubled, where $Z \ne Z'$ are two scopes of size $\scopesize$.
That is, the number of times we visited a state-action pair $x$ with $x[Z \cup Z'] = v$ is doubled.
Note that the standard doubling technique of \citet{AuerUCRL}, i.e., when the number of visits to some state-action pair is doubled, will result in regret that depends polynomially on the size of the state-action space, which is exponentially larger than the size of its factors.
Moreover, our doubling scheme is different than \citet{xu2020near}, where the episode size grows arithmetically.
This allows us to obtain tighter regret bound that depends on the different sizes of all the factors, and not just the biggest one $\stateactionfactorsize \approx \statefactorsize^\scopesize$.

While optimism is a standard framework for regret minimization, our algorithm features two novel techniques to handle unknown structure.
First, we show how structure learning can be combined with optimism through the concept of consistent scopes.
This already gives an algorithm with bounded regret, but requires exponential running time and space complexity.
Second, in \cref{sec:construct-optimistic-mdp,sec:from-optimistic-to-factored} we present a novel construction that allows to compute the optimistic policy in an oracle-efficient and space efficient manner, although the number of consistent factored structures is clearly exponential.

For every factor $i$ we maintain a set $\consscopes{k}{i}$ of its consistent scopes up to episode $k$ (we keep a similar set $\rewardconsscopes{k}{j}$ for every reward factor $j$).
In the beginning of the episode we construct an optimistic MDP $\optM{k}$ out of all possible configurations of consistent scopes.
We then compute the optimal policy $\tilde \pi^k$ of $\optM{k}$, extract the optimistic policy $\pi^k$ and play it throughout the episode.
In what follows, we denote by $t_k$ the first time step of episode $k$, and slightly abuse notation by using $\bar P^k,\epsilon^k,N^k$ for $\bar P^{t_k},\epsilon^{t_k},N^{t_k}$.

\begin{remark*}[Computational complexity]
    The computational complexity of our algorithm scales exponentially with the scope size $\scopesize$, but polynomially with the number of factors $\stateactionnumfactors , \statenumfactors , \rewardnumfactors$.
    This dependence is unavoidable \citep{abbeel2006learning,strehl2007efficient} since the number of possible scopes is $\binom{\stateactionnumfactors}{\scopesize}$ and the size of the FMDP encoding is also exponential in $\scopesize$.
    In fact, the complexity of all previous regret minimization algorithms (except \citet{xu2020near}) is exponential even in the number of factors $\statenumfactors$ and not just in the scope size $\scopesize$.
    Since FMDPs with large scope size are not practical (their representation is huge), one should think of $\scopesize$ as very small compared to $\stateactionnumfactors , \statenumfactors , \rewardnumfactors$.
\end{remark*}

\begin{algorithm}
    \caption{SLF-UCRL Sketch}
    \label{alg:SLF-UCRL-main}
    \begin{algorithmic}
       
        \STATE {\bfseries Input:} $\delta , \scopesize , S = \{ S_i \}_{i=1}^\statenumfactors , S \times A = X = \{ X_i \}_{i=1}^\stateactionnumfactors$.
        
        \STATE Initialize visit counters  and sets of consistent scopes. 
        
        \FOR{$k=1,2,\dots$}
        
            \STATE Start new episode $k$, and compute empirical transition function $\bar P^k$ and confidence bounds $\epsilon^k$.
            
            \STATE Eliminate inconsistent scopes (\cref{alg:elim-incons}), and construct optimistic MDP $\optM{k}$.
            
            \STATE Compute optimal policy $\tilde \pi^k$ of $\optM{k}$ using oracle, and extract optimistic policy $\pi^k$.
            
            \STATE Execute policy $\pi^k$ until there are scopes $Z \ne Z'$ of size $\scopesize$ and $v \in X[Z \cup Z']$ such that the number of visits to some state-action pairs $x$ with $x[Z \cup Z'] = v$, is doubled.
        
        \ENDFOR
    \end{algorithmic}
\end{algorithm}

\begin{algorithm}
    \caption{Eliminate Inconsistent Scopes Sketch}
    \label{alg:elim-incons}
    \begin{algorithmic}
        
        \FOR{$i=1,\dots,\statenumfactors$ and $Z \in \consscopes{k-1}{i}$}
        
            \FOR{$Z' \subseteq \{ 1,\dots,\stateactionnumfactors \}$ of size $\scopesize$ and $v \in X[Z \cup Z']$ and $w \in S_i$}
        
            \IF{$| \bar P^k_{i,Z \cup Z'}(w \mid v) - \bar P^k_{i,Z}(w \mid v[Z]) | > 2 \cdot \epsilon^k_{i,Z \cup Z'}(w \mid v)$}
                
                 \STATE Eliminate inconsistent scope: $\consscopes{k}{i} \gets \consscopes{k}{i} \setminus \{ Z \}$, and BREAK.
            
            \ENDIF
            
            \ENDFOR
        
        \ENDFOR
        
        \STATE \# Inconsistent reward scopes are eliminated from $\rewardconsscopes{k}{j}$ similarly, for every $j=1,\dots,\rewardnumfactors$.
        
    \end{algorithmic}
\end{algorithm}

\begin{remark*}[Partial structure knowledge]
    The SLF-UCRL algorithm easily accommodates any level of knowledge regarding the structure.
    That is, the consistent scopes sets can be adjusted if some scopes are known or have a known compact representation (e.g., decision trees).
    The algorithm's complexity and regret scale naturally with the level of structure knowledge, making it extremely useful when specific domain knowledge is available (e.g., dynamics of some physical systems in robotics).
\end{remark*}

\subsection{Constructing the Optimistic MDP \texorpdfstring{$\wt M^k$}{}}
\label{sec:construct-optimistic-mdp}

As our construction generalizes the one of \citet{xu2020near} to the case of unknown structure, we start with a brief overview of their method.
With known structure, their optimistic MDP keeps the same state space $S$ but has an extended action space $A \times S$, where playing action $(a,s')$ in state $s$ corresponds to playing action $a$ and using a transition function that puts all the uncertainty in the direction of state $s'$, such that for each factor $i$ the $L_1$ distance between the empirical and optimistic transition functions is bounded by
$
    \sum_{w \in S_i} \epsilon^k_{i,Z_i^P}(w \mid (s,a)[Z_i^P]) = \tO \bigl( \sqrt{\nicefrac{|S_i|}{N^k_{Z_i^P} \bigl( (s,a)[Z_i^P] \bigr)}} \bigr).
$

Formally, let $\confrad^k_{i,Z}(w \mid v) = \min \{ \epsilon^k_{i,Z}(w \mid v) , \bar P^k_{i,Z}(w \mid v) \}$ and then the probability that in the optimistic MDP the $i$-th factor of the next state is $w$ after playing $(a,s')$ in state $s$ is
\begin{align*}
    \bar P^k_{i,Z_i^P} \bigl(w \mid (s,a)[Z_i^P] \bigr) - \confrad^k_{i,Z_i^P} \bigl( w \mid (s,a)[Z_i^P] \bigr) 
    + 
    \indevent{w = s'[i]} \cdot \sum_{w' \in S_i} \confrad^k_{i,Z_i^P} \bigl( w' \mid (s,a)[Z_i^P] \bigr).
\end{align*}
The $j$-th reward factor of this action is the empirical estimate plus an additional optimistic bonus, i.e.,
\[
    \min \Bigl\{ 1, \bar r^k_{j,Z_j^r} \bigl( (s,a)[Z_j^r] \bigr) + \epsilon^k_{Z_j^r} \bigl( (s,a)[Z_j^r] \bigr) \Bigr\}.
\]
Notice that this optimistic MDP is factored, that the number of state-action factors increased by $\statenumfactors$, and that the scope size increased by only $1$. 
Thus, this method indeed keeps oracle-efficiency.

The naive way to extend this idea to unknown structure is to compute the optimistic MDP for every configuration of consistent scopes, and pick the most optimistic one, i.e., the configuration in which the optimal gain is the biggest.
However, this requires exponential number of calls to the oracle.

Instead, we propose to extend the action space even further so the policy can pick the scopes as well as the actions.
That is, the extended action space is $\wt A^k = A \times S \times \consscopes{k}{1} \times \dots \times \consscopes{k}{\statenumfactors} \times \rewardconsscopes{k}{1} \times \dots \times \rewardconsscopes{k}{\rewardnumfactors}$, and playing action $\tilde a = (a,s',Z_1,\dots,Z_\statenumfactors,z_1,\dots,z_\rewardnumfactors)$ in state $s$ corresponds to playing action $a$, using a reward function according to scopes $z_1,\dots,z_\rewardnumfactors$, and using a transition function according to scopes $Z_1,\dots,Z_\statenumfactors$ that puts all the uncertainty in the direction of $s'$.
Formally, for every reward factor $j$ define
$
    \tilde r^k_j(\tilde x)
    =
    \min \bigl\{ 1 , \bar r^k_{j,z_j} \bigl( (s,a)[z_j] \bigr) + \epsilon^k_{z_j} \bigl( (s,a)[z_j] \bigr) \bigr\},
$ where $\tilde x = (s,\tilde a)$.
For every factor $i$ and $w \in S_i$ define
\begin{align}
    \nonumber
    \wt P^k_i (w | \tilde x)
    & \eqdef
    \bar P^k_{i,Z_i}(w \mid (s,a)[Z_i]) - \confrad^k_{i,Z_i}(w \mid (s,a)[Z_i]) 
    \\
    \label{eq:optimistic-P-def}
    & \qquad + 
    \indevent{w = s'[i]} \cdot \sum_{w' \in S_i} \confrad^k_{i,Z_i}(w' \mid (s,a)[Z_i]).
\end{align}

Unfortunately, although this elegant construction looks like a factored MDP, it is in fact not factored.
Specifically, the transition and reward functions become non-factored because each factor can now depend on all the factors of the state-action space (this is determined by the policy choosing the scopes), i.e., the scope size is now $\stateactionnumfactors$.
Nevertheless, in the following section we show that the optimal policy of this optimistic MDP can still be computed by the oracle.
To that end, we construct a slightly larger MDP that has the same optimal policy and gain, while being factored with small scopes.

\subsection{From Optimistic MDP \texorpdfstring{$\wt M^k$}{} to Optimistic Factored MDP \texorpdfstring{$\widehat M^k$}{}}
\label{sec:from-optimistic-to-factored}

We construct a factored MDP $\widehat M^k$ that simulates exactly the dynamics of the optimistic MDP $\optM{k}$.
The idea is to stretch each time step to $2 + \log \stateactionnumfactors$ steps.
In the first step the policy chooses a combined action $\tilde a$ as described in \cref{sec:construct-optimistic-mdp}, in the next $\log \stateactionnumfactors$ steps relevant factors are extracted according to the policy's choices, and in the last step the transition is performed according to $\wt P^k$ (\cref{eq:optimistic-P-def}).
Since the relevant factors for the transition were already extracted, this time the scope size remains small.

For $\widehat M^k$, we keep the extended action space $\widehat A^k = \wt A^k$ and extend the state space $\widehat S^k$ to contain 
the state,  steps counter, the policy's picked scopes and optimistic assignment state, and a ``temporary'' work space.
Formally, $\widehat S^k = S \times \{0,1,\dots,\log \stateactionnumfactors + 1 \} \times S \times \consscopes{k}{1} \times \dots \times \consscopes{k}{\statenumfactors} \times \rewardconsscopes{k}{1} \times \dots \times \rewardconsscopes{k}{\rewardnumfactors} \times \Omega^{(\statenumfactors + \rewardnumfactors) \scopesize
}$, 
where $S$ keeps the state, $\{0,1,\dots,\log \stateactionnumfactors + 1 \}$ is a counter of the current step within the actual time step, and $S \times \consscopes{k}{1} \times \dots \times \consscopes{k}{\statenumfactors} \times \rewardconsscopes{k}{1} \times \dots \times \rewardconsscopes{k}{\rewardnumfactors}$ keeps the policy's picked scopes and optimistic assignment state.
For each factor $i$ (also for each reward factor $j$) and index $e \in \{1,\dots,\scopesize\}$, we have a separate ``temporary'' work space $\Omega = \omega^{\stateactionnumfactors} \times \omega^{\stateactionnumfactors / 2} \times \dots \times \omega^2 \times \omega$ that allows extracting the $e$-th element of the scope for the transition of factor $i$ while maintaining small scope sizes.
Here, $\omega = ( \bigcup_{i=1}^\statenumfactors S_i ) \cup ( \bigcup_{i=\statenumfactors+1}^\stateactionnumfactors A_i)$ keeps one factor (state or action), so $|\omega| = W$.

A state $s$ in $M$ is mapped to $(s,0,\bot)$\footnote{We use $\bot$ to indicate that the rest of the state is irrelevant.} and taking action $\tilde a = (a,s',Z_1,\dots,Z_\statenumfactors,z_1,\dots,z_\rewardnumfactors)$ results in a deterministic transition to $(s,1,s',Z_1,\dots,Z_\statenumfactors,z_1,\dots,z_\rewardnumfactors, \tau)$, where $\tau = (\tau_{i,e}) \in \Omega^{ (\statenumfactors + \rewardnumfactors) \scopesize}$.
The state-action pair $(s,a)$ is copied to each of the work spaces, i.e., $\tau_{i,e} = (s,a,\bot)$.
The next $\log \stateactionnumfactors$ steps are used to extract the relevant scopes. The policy has no effect in these steps since $a,s'$ and the chosen scopes are now encoded into the state. 
In these $\log n$ steps, for each $(i,e)$, we eliminate half of $\tau_{i,e}$ in each step according to its chosen scope $Z_i$, until we are left with the $e$-th factor of $(s,a)[Z_i]$.
The elimination steps require scopes of size only $4$ since each factor of the next step needs to choose between two factors from the previous step (while considering the scope $Z_i$ chosen by the policy and the counter).
The final step performs the transition according to $\wt P^k$, but notice that now it only requires scopes of size $\scopesize +3$.
The reason is that now $(s,a)[Z_i]$ has a fixed location within the state, i.e., $(s,a)[Z_i][e]$ is in the last factor of $\tau_{i,e}$ (in addition, the counter, $s'[i]$ and $s[i]$ need to be taken into consideration).
At this point the agent also gets the reward $\wt r^k$, whereas the reward in all other time steps is $0$.
Similarly to the transitions, the reward scopes are of size $\scopesize + 1$ because $(s,a)[z_j]$ has a fixed location (and the counter should also be considered).
For more details see \cref{app:algs}.

It is easy to see that $\widehat M^k$ simulates $\optM{k}$ exactly, because every $2 + \log \stateactionnumfactors$ steps are equivalent to one step in $\optM{k}$.
In terms of computational complexity, any planner that is able to solve $M$ can also solve $\widehat M^k$, since it is factored and polynomial in size when compared to $M$.
Indeed, the scope size is $\scopesize + 3$ (compared to $\scopesize$), the number of state factors is $3 \statenumfactors + \rewardnumfactors + 1 + 2 \stateactionnumfactors \scopesize (\statenumfactors + \rewardnumfactors)$ (compared to $\statenumfactors$), the number of action factors is $\stateactionnumfactors + \statenumfactors + \rewardnumfactors$ (compared to $\stateactionnumfactors - \statenumfactors$), the size of each state factor is bounded by $\max \{ \statefactorsize , \binom{\stateactionnumfactors}{\scopesize} \}$ (compared to $\statefactorsize$), and finally the size $\stateactionfactorsize$ is replaced by $\max \{ \stateactionfactorsize , \binom{\stateactionnumfactors}{\scopesize} \} \statefactorsize^2 (2 + \log \stateactionnumfactors)$.
Given the optimal policy $\hat \pi^k$ for $\widehat M^{k}$, we can easily extract the optimal policy $\tpi^k(s) = \hat \pi^k((s,0,\bot))$ for $\optM{k}$, and the optimistic policy $\pi^k(s) = \tpi^k(s)[1] = \hat \pi^k((s,0,\bot))[1]$ for the original MDP $M$.

\subsection{Avoiding Large Factors}
\label{sec:avoid-large-factors}

One shortcoming of the above construction is that the factor size may be significantly larger, i.e., $\binom{\stateactionnumfactors}{\scopesize}$ instead of $\statefactorsize$ in the original FMDP.
As mentioned before, $\scopesize$ is considered to be small, and yet one might prefer to keep the factor size small at the expense of adding a few extra factors and increasing the reward scope size by $1$.
In what follows, we show that this is indeed possible because each action factor we added for choosing a consistent scope is already factored internally into $\scopesize$ factors of size $\stateactionnumfactors$.

We view the extended action space as $A \times S \times \{ 1,\dots,\stateactionnumfactors \}^{\scopesize(\statenumfactors + \rewardnumfactors)}$ which has $\stateactionnumfactors + \scopesize(\statenumfactors + \rewardnumfactors)$ factors of size $\max \{ \statefactorsize , \stateactionnumfactors \}$.
Similarly, we can view the state space as $2 \statenumfactors + 1 + \scopesize(\statenumfactors + \rewardnumfactors) + 2 \stateactionnumfactors \scopesize (\statenumfactors + \rewardnumfactors)$ factors of the same size.
Luckily we can still keep the same $\scopesize+3$ scope size, since the consistent scopes are used only in the $\log \stateactionnumfactors$ intermediate steps in which the scope size was $4$ and now becomes $\scopesize + 3$.
However, this gives rise to a new problem: now the policy might choose inconsistent scopes because the action space is not restricted to consistent scopes anymore.
To overcome this issue, we enforce the optimal policy in $\widehat M^k$ to use only consistent scopes by adding $2(\statenumfactors + \rewardnumfactors)$ binary factors.
These factors make sure that any policy that uses an inconsistent scope will never earn a reward.

All the new binary factors start as $1$, and we refer to them as bits.
When the counter is $0$, the $i$-th bit becomes $0$ if the chosen scope for factor $i$ is inconsistent.
Similarly, the $(\statenumfactors+j)$-th bit checks the chosen scope for reward factor $j$.
This requires them to have scope size $\scopesize+2$, and in the next $\log (\statenumfactors+\rewardnumfactors)$ steps we extract out of them one bit that says if an inconsistent scope was chosen.
This is done similarly to the extraction of relevant scopes and requires the counter to reach $\max \{ \log (\statenumfactors+\rewardnumfactors) , \log \stateactionnumfactors\} + 1$ instead of $\log \stateactionnumfactors + 1$.
Finally, when giving a reward in the last step, the reward function also considers the extracted bit and gives $0$ reward if it is $0$.
Since it cannot turn back to $1$, this bit ensures that a policy that uses an inconsistent scope has a gain of $0$.

\subsection{Regret Analysis}

In \cref{sec:slf-ucrl-regret-proof} we prove the following regret bound for SLF-UCRL.
Here we review the main ideas.

\begin{theorem}
    \label{thm:slf-ucrl-regret-bound}
    Running SLF-UCRL on a factored MDP with unknown structure ensures, with probability at least $1 - \delta$,
    \begin{align*}
        \regret
        =
        \wt O \biggl( \sum_{i=1}^\statenumfactors \sum_{Z : |Z| = \scopesize} D \sqrt{ |S_i| |X[Z_i^P \cup Z]| T} 
        + \frac{1}{\rewardnumfactors} \sum_{j=1}^\rewardnumfactors \sum_{Z : |Z| = \scopesize} \sqrt{|X[Z_j^r \cup Z]| T} \biggr).
    \end{align*}
\end{theorem}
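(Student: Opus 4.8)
The plan is to run the optimism-in-face-of-uncertainty analysis of UCRL2, adapted to the factored setting and augmented with the consistent-scopes machinery for structure learning. First I would define a \emph{good event} on which all empirical estimates concentrate: for every factor $i$, every scope $Z$ of size $\scopesize$, every $(v,w)$ and every time $t$, the deviation of $\bar P^t_{i,Z}(w\mid v)$ from its mean is controlled by $\epsilon^t_{i,Z}(w\mid v)$, and analogously for rewards. A Bernstein/Hoeffding argument together with a union bound over the $\binom{\stateactionnumfactors}{\scopesize}$ scopes gives this event probability at least $1-\delta$; the logarithmic factor $\tau^t$ inside $\epsilon$ is calibrated exactly to absorb this union bound. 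On this event I would establish the two structural guarantees flagged in the paper: (i) the true scope $Z_i^P$ is never eliminated, since its empirical estimate concentrates on $P_i$ and hence passes the test \cref{eq:consistent-def}; and (ii) any surviving scope $Z \in \consscopes{k}{i}$ yields an estimate close to the truth, obtained by chaining through $\bar P^k_{i,Z\cup Z_i^P}$ using the consistency inequality for both $Z$ and $Z_i^P$. The key point is that the resulting error is controlled by $\epsilon^k_{i,Z\cup Z_i^P}$, which is precisely where the union scope $Z_i^P \cup Z$, and hence the factor $|X[Z_i^P \cup Z]|$, enters the final bound.

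Second, I would prove optimism: $\optlambda{\star}(\optM{k}) \ge \lambda^\star(M)$. Because the true scopes are consistent by (i) and the construction of \cref{sec:construct-optimistic-mdp} lets the policy both select scopes and steer all per-factor uncertainty toward a chosen next-state $s'$, the set of transitions realizable in $\optM{k}$ contains the true $P$ for the true-scope configuration; the standard extended-MDP argument then yields optimism. I would also note that the span of the optimistic bias function is $O(D)$, since $\optM{k}$ contains the true dynamics and its diameter is therefore at most $D$.

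Third comes the regret decomposition. Using optimism to replace $\lambda^\star(M)$ by $\optlambda{\star}(\optM{k})$ within episode $k$, I would invoke the Poisson equation for the optimistic policy and telescope its bias function along the trajectory. This splits each per-step gap into a martingale-difference term, summed by Azuma to $\tO(D\sqrt{T})$ using the doubling-bounded number of episodes, and a transition-error term of the form $\mathrm{span}\cdot\|\tP^k(\cdot\mid s^t,a^t) - P(\cdot\mid s^t,a^t)\|_1$. Here I would exploit the factored structure to bound the $L_1$ distance between product distributions by the sum over factors $i$ of $\|\tP^k_i - P_i\|_1$, and then use guarantee (ii) to bound each per-factor term by $O\bigl(\sum_{w}\confrad^k_{i,Z_i\cup Z_i^P}(w\mid (s,a)[Z_i\cup Z_i^P])\bigr)$, where $Z_i$ is the consistent scope the policy chose at that step.

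Finally I would sum the confidence radii over time. The doubling scheme, triggered when visits to some $v\in X[Z\cup Z']$ double, guarantees that within an episode each count $\numvisitsbefore{t}{Z\cup Z_i^P}(v)$ changes by at most a constant factor and that the number of episodes is only logarithmic, so $\epsilon^t\approx\epsilon^k$ throughout. A Cauchy--Schwarz/pigeonhole argument, summing $1/\sqrt{N}$ over visits with the $\sqrt{|S_i|}$ arising from Cauchy--Schwarz over the $|S_i|$ next-state values, converts $\sum_t\sum_w\confrad^k_{i,Z_i\cup Z_i^P}$ into $\tO(D\sqrt{|S_i|\,|X[Z_i^P\cup Z]|\,T})$ for each fixed chosen scope $Z$; summing over the possible chosen scopes $Z$ and over factors $i$ gives the transition term, and the analogous diameter-free, $\tfrac{1}{\rewardnumfactors}$-weighted argument for rewards gives the reward term. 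The main obstacle I anticipate is guarantee (ii) together with its bookkeeping: correctly routing the error of an arbitrary surviving scope through the union scope so that the confidence radius, and thus the final $|X[Z_i^P\cup Z]|$ dependence, comes out with the right scope, while ensuring optimism still holds uniformly over the exponentially many consistent configurations the policy may exploit.
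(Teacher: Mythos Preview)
Your proposal is correct and follows essentially the same route as the paper: define the failure events via Bernstein/Hoeffding plus Azuma, show the true scopes remain consistent so that optimism holds, decompose the regret via the Bellman equations in $\optM{k}$, bound the per-factor $L_1$ transition error by chaining $\bar P^k_{i,Z_i}\to\bar P^k_{i,Z_i\cup Z_i^P}\to P_i$ through the consistency test, and finish with the doubling/Jensen summation. One small wording point: your good event should be stated for scopes of the form $Z_i^P\cup Z$ (where concentration around the fixed value $P_i$ is actually well-defined), not for arbitrary size-$\scopesize$ scopes; you already use it this way in the chaining step, so this is only a phrasing issue.
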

In the worst-case regret, this regret bound becomes $\regret = \wt O ( \binom{\stateactionnumfactors}{\scopesize} \statenumfactors D \sqrt{\statefactorsize \stateactionfactorsize^2 T} )$. 
In comparison to the regret bound of \citet{xu2020near} for the known structure case, our bound is worse by only a factor of $\binom{\stateactionnumfactors}{\scopesize} \sqrt{\stateactionfactorsize}$.
While the exponential dependence in $\scopesize$ (hidden already in $\stateactionfactorsize$) is unavoidable, it is an important open problem whether the multiplicative dependence in $\binom{\stateactionnumfactors}{\scopesize}$ is necessary (note that it directly stems from the level of structure knowledge and may be much smaller with some domain knowledge).
We believe that the $\sqrt{\stateactionfactorsize}$ factor can be avoided with methods such as the meteorologist algorithm of \citet{diuk2009adaptive}, since it comes from our simple structure learning method, i.e., comparing all pairs of scopes $Z \ne Z'$ of size $\scopesize$.
Still, it is highly unclear how to incorporate these methods in a regret minimization algorithm.
Finally, we stress that ignoring the unknown factored structure leads to regret polynomial in the number of states, which is exponential compared to ours.

\begin{proof}[Proof sketch]
    Regret analysis for optimistic algorithms has two main parts: (1) \emph{optimism} - show that the optimal gain in the optimistic model $\optM{k}$ is at least as large as $\lambda^\star(M)$ for all episodes $k$ with high probability; (2) \emph{deviation} - bound the difference between the optimistic policy's gains in $M$ and $\optM{k}$.
    
    Optimism follows directly from the consistent scopes definition and standard concentration inequalities.
    Specifically, since the true scopes are always consistent with high probability, the optimistic policy in the optimistic model maximizes its gain while choosing scopes from a set that contains the true scopes.
    For the deviation, we need to bound the distance between the true and optimistic dynamics along the trajectory visited in each episode $k$.
    That is, we need to relate $\Delta_t = \lVert \wt P^k(\cdot | \tilde x^t) - P(\cdot | x^t) \rVert_1$ to the confidence radius $\epsilon^k$, where $\tilde \pi^k(s^t) = (a^t,s'^t,Z^t_1,\dots,Z^t_\statenumfactors,z^t_1,\dots,z^t_\rewardnumfactors) , x^t = (s^t,a^t)$ and $\tilde x^t = (s^t, \tilde \pi^k(s^t))$.
    Then, we can sum the confidence radii over $t=1,\dots,T$ and get the final bound.
    
    To that end, we utilize the transition factorization to bound $\Delta_t \le \sum_{i=1}^\statenumfactors \bigl\lVert \wt P^k_i ( \cdot | \tilde x^t ) - P_i ( \cdot | x^t [Z_i^P] ) \bigr\rVert_1$.
    Then, for each $i$ we can use the definition of the optimistic transition function $\wt P^k$ (\cref{eq:optimistic-P-def}) to get
    \[
        \Delta_t
        \lesssim
        \sum_{i=1}^\statenumfactors \bigl\lVert \bar P^k_{i,Z^P_i} ( \cdot \mid x^t [Z^P_i] ) - P_i ( \cdot \mid x^t [Z_i^P]) \bigr\rVert_1 + \sum_{i=1}^\statenumfactors \bigl\lVert \bar P^k_{i,Z^t_i} ( \cdot \mid x^t[Z^t_i] ) - \bar P^k_{i,Z^P_i} ( \cdot \mid x^t [Z^P_i] ) \bigr\rVert_1.
    \]
    The first term measures the difference between the empirical and true dynamics on the correct scopes $Z_i^P$ and can therefore be bounded with standard concentration inequalities.
    For the second term we utilize the fact that the chosen scopes $Z_i^t$ must be consistent.
    Therefore, we can bound it using \cref{eq:consistent-def} by $\approx \sum_{i=1}^\statenumfactors \sum_{w \in S_i} \epsilon^k_{i,Z_i^P \cup Z_i^t}(w \mid x^t[Z_i^P \cup Z_i^t]) \lesssim \sum_{i=1}^\statenumfactors \sqrt{\nicefrac{|S_i|}{\numvisitsbefore{k}{Z_i^P \cup Z_i^t}(x^t[Z_i^P \cup Z_i^t])}}$.
\end{proof}

\section{Factored MDPs with Non-Factored Actions}
\label{sec:nfa-known}

So far we assumed that both the state and action spaces are factored.
While this model is very general, it also requires an oracle that can solve it.
However, almost all existing approximate FMDP planners do not address factored action spaces.
Moreover, implicitly they assume that the action set is small (or with very unique structure), as they pick a greedy policy with respect to some Q-function estimation.

To make our algorithm more compatible with approximate planners, in this section we do not assume that the action space is factored, and our oracle is limited to such FMDPs.
We show that a variant of our algorithm can still achieve similar regret bounds and maintain computational efficiency.
This makes our algorithm much more practical than the DORL algorithm of \citet{xu2020near}.
The FMDP definition we adopt assumes that the state space is factored $S = S_1 \times \dots \times S_\statenumfactors$, and that the transition function is factored, only with respect to the state space, in the following manner.
The factored reward function is defined similarly, but to simplify presentation, we  assume it is known.

\begin{definition}
    Transition function $P$ is called factored over $S = S_1 \times \dots \times S_\statenumfactors$ with scopes $Z_1^P,\dots,Z_\statenumfactors^P$ if there exist functions $\{ P_i: S[Z_i^P]  \times A \to \Delta_{S_i} \}_{i=1}^\statenumfactors$ s.t. 
    $
        P(s' \mid s,a) 
        = 
        \prod_{i=1}^\statenumfactors P_i(s'[i] \mid s[Z_i^P],a).
    $
\end{definition}

We focus on known structure to convey the main ideas, but in \cref{sec:nfa-slf-ucrl} we show that the methods presented here can be extended to handle unknown structure.
The DORL algorithm \citep{xu2020near} highly relies on the factored action space, because the optimistic MDP is defined using the huge (yet factored) action space $A \times S$ that allows incorporating an optimistic estimate of the dynamics.
Instead, we propose to spread the transition across $2+\statenumfactors$ steps.
In the first step the policy picks an action, step $i + 1$ performs the $i$-th factor optimistic transition, and the last step completes the move.

Formally, the state space of $\optM{k}$ is $\wt S = S \times \{0,1,\dots,\statenumfactors+1\} \times A \times S \times \{0,1\}$, where $S$ keeps the state, $\{0,1,\dots,\statenumfactors+1\}$ is a counter of the current step within the actual time step, $A$ keeps the policy's chosen action, another $S$ helps perform the transition, and the last bit validates that the chosen actions are legal.
The action space of $\optM{k}$ is $\wt A = A \cup ( \bigcup_{i=1}^\statenumfactors S_i )$.
The size of $\wt A$ is $\max \{ |A| , \statefactorsize \}$ which is exponentially smaller compared to $|A| \statefactorsize^\statenumfactors$ in the original construction of \citet{xu2020near}.

A state $s$ in $M$ is mapped to $(s,0,\bot)$ and action $a \in A$ deterministically transitions to $(s,1,a,\bot)$, while the other actions are not legal at  this state.
Picking an illegal action turns the last bit to $0$ (it starts as $1$), canceling all rewards similarly to \cref{sec:avoid-large-factors}.
In state $(s,i,a,w_1,\dots,w_{i-1},\bot)$, legal actions are $S_i$, and action $w \in S_i$ transitions to state $(s,i+1,a,w_1,\dots,w_{i-1},w_i,\bot)$ with probability
\[
    \bar P^k_{i,Z_i^P} (w_i \mid s[Z_i^P],a) - \confrad^k_{i,Z_i^P}(w_i \mid s[Z_i^P],a) 
    + 
    \indevent{w_i = w} \cdot \sum_{w' \in S_i} \confrad^k_{i,Z_i^P}(w' \mid s[Z_i^P],a).
\]
Finally, $(s,\statenumfactors+1,a,w_1,\dots,w_\statenumfactors,b)$ transitions deterministically to $(s',0,\bot)$ for $s' = (w_1,\dots,w_\statenumfactors)$.

Similarly to \cref{sec:construct-optimistic-mdp}, one can see that the scope size is now $\scopesize + 3$, the number of factors is $2 \statenumfactors + 3$, the size of each factor is bounded by $\max \{ \statefactorsize , |A| , \statenumfactors+2 \}$, and that the number of actions remains small.
Thus we can use the limited oracle in order to solve the optimistic MDP.
As for the regret bound, it is easy to verify that optimism still holds, but it is not clear that we can still bound the deviation because now the policy in the optimistic model has significantly more ``power'' -- it chooses the uncertainty direction for factor $i$ after the realizations for factors $1,\dots,i-1$ of the next state are already revealed.
Next, we show that it can still be bounded similarly since the actual action of the policy is chosen before the realizations are revealed (the action is chosen in the first of $\statenumfactors + 1$ steps).

To see that, consider an MDP $M'$ that models the exact same process as $M$ but resembles our optimistic MDP as each time step is stretched over $\statenumfactors + 2$ steps.
The state space of $M'$ is $\wt S$ like $\wt M^k$, and taking action $a \in A$ in state $(s,0,\bot)$ transitions to state $(s,1,a,\bot)$.
Then, the policy has no effect for $\statenumfactors+1$ steps and the action is embedded in the state.
For every $i$ and $w_i \in S_i$, the probability of transitioning from $(s,i,a,w_1,\dots,w_{i-1},\bot)$ to $(s,i+1,a,w_1,\dots,w_{i-1},w_i,\bot)$ is simply $P_i(w_i \mid s[Z_i^P],a)$, and finally, $(s,i,a,w_1,\dots,w_\statenumfactors,b)$ transitions to $(w_1,\dots,w_\statenumfactors,0,\bot)$.

Clearly, playing policy $\pi$ in $M$ is equivalent to playing policy $\pi'$ in $M'$ such that $\pi'((s,0,\bot)) = \pi(s)$. 
Therefore, $\lambda^\star(M') = \frac{\lambda^\star(M)}{\statenumfactors + 2}$ and we can analyze the regret in $M'$ to obtain a similar regret bound to \citet{xu2020near}.
The full algorithm which we call Non-Factored Actions DORL (NFA-DORL) is found in \cref{sec:appendix-nfa-dorl} and the full proof of the following regret bound is found in \cref{sec:proof-regret-nfa-dorl}.

\begin{theorem}
    \label{thm:nfa-dorl-regret-bound}
    Running NFA-DORL on a factored MDP with non-factored actions and known structure ensures with probability $1 - \delta$,
    $
        \regret
        =
        \wt O ( \sum_i D \sqrt{ |S_i| |S[Z_i^P]| |A| T} 
        + 
        \frac{1}{\rewardnumfactors} \sum_j \sqrt{ |S[Z_j^r]| |A| T } ).
    $
\end{theorem}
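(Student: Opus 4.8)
The plan is to never bound the regret in $M$ directly, but to transfer the entire analysis to the stretched MDP $M'$ introduced above. Since each real transition of $M$ is simulated by a block of $\statenumfactors+2$ steps of $M'$, with exactly one (bounded) reward collected per block, the total reward gathered over $T' = (\statenumfactors+2)T$ steps of $M'$ equals $\sum_{t=1}^T r^t$, while $\lambda^\star(M')\,T' = \tfrac{\lambda^\star(M)}{\statenumfactors+2}(\statenumfactors+2)T = \lambda^\star(M)\,T$. Hence $\sum_{t'=1}^{T'}\bigl(\lambda^\star(M') - r'^{t'}\bigr) = \regret$, and it suffices to bound the regret of NFA-DORL as measured in $M'$. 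As in the sketch of \cref{thm:slf-ucrl-regret-bound}, the argument rests on two pillars: \emph{optimism}, i.e. $\lambda^\star(\optM{k}) \ge \lambda^\star(M')$ for every episode $k$ with high probability, and \emph{deviation}, i.e. a bound on the gap between the gain of the executed policy in $\optM{k}$ and in $M'$.

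Optimism is the easy part. A union bound over all factors $i$, scopes, values $(s[Z_i^P],a,w)$ and time steps (the latter absorbed into $\tau^t$) shows that, with probability at least $1-\delta$, every true factor distribution $P_i(\cdot\mid s[Z_i^P],a)$ lies within the confidence region of radius $\epsilon^k$ around $\bar P^k_{i,Z_i^P}$. Consequently the optimal policy of $M'$ can be reproduced inside $\optM{k}$ by choosing the uncertainty direction equal to the realized next factor at every factor-step, recovering a transition that stochastically dominates the true one; therefore $\lambda^\star(\optM{k}) \ge \lambda^\star(M')$. The extra freedom of the optimistic policy — choosing the direction of factor $i$ only after factors $1,\dots,i-1$ are revealed — can only increase $\lambda^\star(\optM{k})$, so it never threatens optimism.

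The deviation bound is where the main obstacle lies, exactly the adaptivity concern flagged above. I decompose the per-episode regret via the Bellman optimality equation of $\optM{k}$ with optimal bias $h^k$: the regret splits into a reward-bonus term $\tilde r^k - r'$, a transition term $\sum_{x'}(\wt P^k - P')(x'\mid\cdot)\,h^k(x')$, and an Azuma--Hoeffding martingale. Two observations tame the transition term. First, although $M'$ stretches time and thus has super-diameter $O((\statenumfactors+2)D)$, the span $\mathrm{sp}(h^k)$ is still $O(D)$: in $\optM{k}$ a bounded reward is collected only once per block and $\lambda^\star(\optM{k}) = O(1/(\statenumfactors+2))$, so the accumulated $\tilde r^k - \lambda^\star(\optM{k})$ along any path between two real states telescopes to $O(D)$. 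Second, within a block the real state $s$ and the committed action $a$ are frozen into the super-state; hence at the factor-$i$ step the reference distribution $P_i(\cdot\mid s[Z_i^P],a)$ is fixed and $\lVert \wt P^k_i - P_i\rVert_1 \lesssim \sum_{w'\in S_i}\confrad^k_{i,Z_i^P}(w'\mid s[Z_i^P],a)$ \emph{uniformly} over the adaptively chosen direction. This is precisely why the extra ``power'' of the optimistic policy is harmless: adaptivity changes which coordinate receives the optimistic mass but never the $L_1$ radius around the frozen reference.

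Collecting the pieces, the transition term over a block is at most $\mathrm{sp}(h^k)\sum_{i=1}^\statenumfactors \sum_{w'}\confrad^k_{i,Z_i^P}(w'\mid s[Z_i^P],a) \lesssim D\sum_i \sqrt{|S_i|/\numvisitsbefore{k}{Z_i^P}(s[Z_i^P],a)}$, up to lower-order terms. Summing over all $T$ blocks and using the doubling episode schedule together with the standard pigeonhole inequality $\sum 1/\sqrt{N}\lesssim\sqrt{(\text{number of distinct values})\cdot T}$ — where the number of distinct $(s[Z_i^P],a)$ is $|S[Z_i^P]|\,|A|$ — yields the first term $\sum_i D\sqrt{|S_i|\,|S[Z_i^P]|\,|A|\,T}$. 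The reward bonuses, one per block, sum analogously into $\tfrac1{\rewardnumfactors}\sum_j \sqrt{|S[Z_j^r]|\,|A|\,T}$. Finally the martingale contributes $\tO(D\sqrt{\statenumfactors T})$ and the logarithmic number of episodes adds lower-order terms, both dominated by the two main terms, giving the claimed bound with probability $1-\delta$. I expect the hardest step to be the deviation argument, namely simultaneously establishing the $O(D)$ span despite the time-stretching and controlling the adaptively chosen optimistic transition through the frozen-$(s,a)$ observation.
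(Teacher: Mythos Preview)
Your proposal is correct and follows essentially the same route as the paper: transfer the regret to the stretched MDP $M'$ via $\lambda^\star(M')=\lambda^\star(M)/(\statenumfactors+2)$, establish optimism in $\optM{k}$, decompose per episode through the Bellman equation with bias $h^k$, control the span by $O(D)$ using that the diameter of $\optM{k}$ is $O((\statenumfactors+2)D)$ while its optimal gain is $O(1/(\statenumfactors+2))$, and bound the transition deviation factor by factor using the frozen-$(s,a)$ observation before summing via the doubling/pigeonhole argument. The only cosmetic difference is that the paper invokes the ``bias $\le$ diameter $\times$ gain'' bound by citation to \citet{bartlett2009regal} whereas you argue it directly, and for optimism the paper picks the uncertainty direction as $\arg\max_s h(s)$ rather than ``the realized next factor''; both lead to the same conclusion.
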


\section{Lower Bound}
\label{sec:lower-bound}

In \cref{app:lower-bound} we prove the following lower bound for regret minimization in factored MDPs.

\begin{theorem}
    \label{thm:lower-bound}
    Let $d > m > 0$.
    For any algorithm there exists an FMDP with $3 d + \log d$ state factors of size at most $\max \{ W + 1 , \log d + 2 \}$, non-factored action space of size $|A|$, and scope size $1 + \max \{ m , \log d \}$, such that
    $
        \bbE [ \regret ]
        =
        \Omega \bigl( \sqrt{\frac{d}{\log d} W^m |A| T} \bigr).
    $
\end{theorem}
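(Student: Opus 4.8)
The plan is to exhibit a single family of FMDPs indexed by a hidden ``reward profile'' and show that no algorithm can do well against a uniformly random profile. I would share one block of $m$ \emph{context} factors (each of size $W$), whose joint value ranges over all $W^m$ contexts, together with $\log d$ \emph{address} factors selecting one of $d$ independent subproblems; the remaining $\Theta(d)$ factors (the ``$3d$'' budget) hold per-subproblem bookkeeping and a workspace. The environment drives the address and context factors independently of the learner's actions (e.g., cycling through all $(\text{subproblem},\text{context})$ pairs), so the learner meets each of the $d W^m$ ``cells'' equally often. In each cell $(j,c)$ I plant a hidden good action $a^\star_{j,c}\in[|A|]$: the Bernoulli reward is $1/2$ for every action and $1/2+\Delta$ for $a^\star_{j,c}$, with a gap $\Delta$ to be tuned. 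The crucial point --- and the source of the $W^m$ factor, impossible with scope size $1$ --- is that the reward of a cell depends on the \emph{joint} value of the $m$ context factors.

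Because the address and context evolve independently of the actions and the reward of cell $(j,c)$ depends only on the current cell and the chosen action, the interaction decomposes into $d W^m$ independent $|A|$-armed bandits whose hidden good arms are mutually independent under the prior. I would implement the routing by the same extraction mechanism used in the optimistic construction of \cref{sec:from-optimistic-to-factored}: locating the active subproblem and the relevant context costs $\Theta(\log d)$ substeps during which no reward-relevant decision is made, so out of $T$ steps only $\Theta(T/\log d)$ are genuine decision steps; these split evenly among the cells, giving $n_c \approx \frac{T}{d W^m \log d}$ decisions per cell. This also explains the stated budgets: the scope reads the $\log d$-bit address during extraction and the $m$-factor context at the reward step (hence scope size $1+\max\{m,\log d\}$), and a counter over $\Theta(\log d)$ substeps needs factor size $\log d + 2$.

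For a single cell I would invoke the standard multi-armed-bandit lower bound: compare a null instance (all arms mean $1/2$) with the instance where $a^\star$ has mean $1/2+\Delta$, bound the KL divergence of the learner's observation sequence by $O\bigl(\Delta^2\,\bbE[\text{pulls of }a^\star]\bigr)$ via the chain rule, and apply Pinsker (or Bretagnolle--Huber) so that $a^\star$ cannot be located in fewer than $\Omega(|A|/\Delta^2)$ pulls; averaging over a uniform prior on $a^\star_{j,c}$ forces $\Omega\bigl(\min\{\Delta n_c,\,|A|/\Delta\}\bigr)$ expected regret in that cell. Since the cells are independent and observations are cell-local, a direct-sum / Assouad-type argument adds the per-cell bounds. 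Setting $\Delta \asymp \sqrt{|A|/n_c}=\sqrt{\frac{|A|\,d W^m \log d}{T}}$ (valid once $T\gtrsim d W^m |A|\log d$, so that $\Delta\le 1/2$) balances the two terms at $\sqrt{|A| n_c}$ per cell, and summing over the $d W^m$ cells yields $\bbE[\regret]=\Omega\bigl(d W^m\sqrt{|A| n_c}\bigr)=\Omega\bigl(\sqrt{\tfrac{d}{\log d}W^m|A|T}\bigr)$.

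The information-theoretic core is routine; the real work is engineering the FMDP so that all budgets hold simultaneously --- factor count $3d+\log d$, factor size $\max\{W+1,\log d+2\}$, and scope exactly $1+\max\{m,\log d\}$ --- while keeping the $d W^m$ cells genuinely independent (so KL contributions do not leak between cells) and keeping the MDP communicating with small diameter so that $\lambda^\star$ is well defined and $D$ does not enter the bound. I expect the delicate points to be (i) confirming that the addressing/extraction overhead costs only a $\Theta(\log d)$ factor in the effective horizon, which is precisely what produces the $1/\log d$ loss, and (ii) verifying that sharing a single context block across all $d$ subproblems --- forced by the $3d$ factor budget, since $dm$ dedicated context factors would be too many --- still yields $d W^m$ informationally-independent cells rather than only $W^m$.
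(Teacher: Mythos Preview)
Your high-level plan---embed $d W^m$ independent $|A|$-armed bandits, let the environment choose one cell per block of $\Theta(\log d)$ steps, and sum the standard per-cell bandit lower bounds---is exactly what the paper does, and your information-theoretic bookkeeping is fine. The gap is precisely the point you flag as (ii), and it is not a detail: with a \emph{single shared} block of $m$ context factors, you cannot meet the scope budget $1+\max\{m,\log d\}$ and simultaneously keep the $d$ subproblems informationally disjoint. If each per-subproblem reward bit reads the shared context directly, it must also read something that tells it whether it is the addressed subproblem; that extra dependence pushes the reward scope to at least $m+2$ (context $+$ one ``active'' bit $+$ counter) or, worse, to $m+\log d+1$ if it reads the address. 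If instead every reward bit fires on the shared context regardless of address, then all $d$ bits are observable each round and the agent harvests information about $d$ cells per step, collapsing the effective count from $dW^m$ to $W^m$ and killing the $\sqrt{d}$ factor.

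The paper resolves this with a trick you are missing: it does \emph{not} share $m$ context factors. It uses $d$ ``value'' factors, each of size $W{+}1$, where the extra symbol $0$ means ``inactive.'' When the counter is $0$, exactly the contiguous window $x,\dots,x{+}m{-}1$ (where $x$ is the address) is drawn uniformly from $\{1,\dots,W\}$ and all other value factors are set to $0$; each value factor therefore has scope $1+\log d$. The reward bit for subproblem $j$ then reads only its own local window of value factors $j,\dots,j{+}m{-}1$ plus the counter---scope exactly $m+1$---and is deterministically $0$ unless $j=x$, because any other window contains a $0$. Thus activity is encoded \emph{inside} the context values themselves, which is why the factor size is $W{+}1$ rather than $W$, and why only $d$ (not $dm$) intermediate factors suffice via overlapping windows. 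The remaining $\approx d$ factors implement a $\log d$-step OR-reduction of the $d$ reward bits down to two bits that the (small-scope) reward function reads. Once you adopt this intermediate layer, your proposal goes through verbatim.
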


The proof leverages our techniques (e.g., propagating rewards through multi-step factored transitions) in order to embed $d W^m$ multi-arm bandit (MAB) problems into a factored MDP, and make sure that they must be solved sequentially and not in parallel.
It features a sophisticated construction to utilize connections between factors in such a way that in each step the learner gets information on just a single MAB, forcing her to solve all the MABs one by one.
Our construction is also the first to feature arbitrary scope size $\scopesize$, while previous constructions simply take $\statenumfactors$ unrelated factors with scope size $1$.
As a result, our construction highlights the unique hardness that factored structure might introduce. 

This is the first lower bound to show that the regret must scale exponentially with the scope size $\scopesize$.
Moreover, it improves on previous lower bounds \citep{tian2020towards,chen2020efficient} by a factor of $\sqrt{\statenumfactors}$, and matches the state-of-the-art regret bound of \citet{chen2020efficient} in the known structure case.
Thus, our lower bound is tight, proving that this is indeed the minimax optimal regret for FMDPs with known structure.
Yet, two intriguing question are left open.
First, the optimal regret algorithm of \citet{chen2020efficient} runs in exponential time, and achieving the same regret with an oracle-efficient algorithm seems like a difficult challenge. 
Second, extending our lower bound to the unknown structure case is another challenging future direction that can advance us towards discovering whether unknown structure indeed introduces additional hardness in terms of optimal regret.

\section{Experiments}
\label{sec:experiments}

\begin{wrapfigure}{r}{6.3cm}
    \centering
    \vspace{-10pt}
    \includegraphics[width=6.3cm,height=3cm]{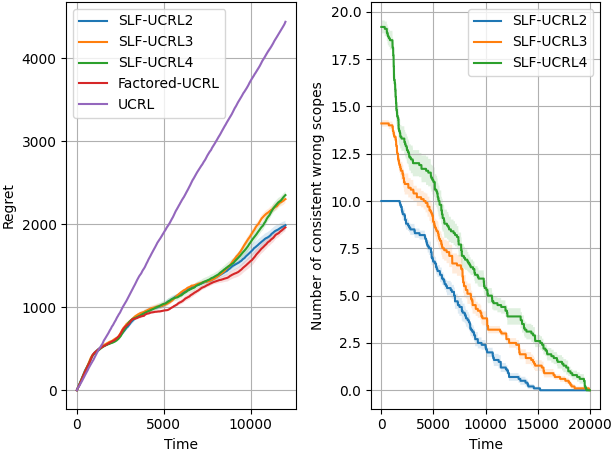}
    \vspace{-15pt}
    \caption{SLF-UCRL performance on circular topology \emph{SysAdmin} with $4$ state factors.}
    \vspace{-15pt}
    \label{fig:exp}
\end{wrapfigure}

We test our algorithm on the \emph{SysAdmin} domain \citep{guestrin2003efficient} -- a network of servers connected by some topology, where failing servers affect the probability of their neighbors to fail and the admin chooses which server to reboot at each time step.
Our experiments show that the performance of SLF-UCRL is comparable to that of Factored-UCRL \citep{osband2014near} that knows the factored structure in advance, and significantly better than the performance of UCRL \citep{AuerUCRL} that completely ignores the factorization.
Figure~\ref{fig:exp} shows that, for circular topology with $4$ servers (i.e., $4$ state factors and scope size $3$), SLF-UCRL eliminates the wrong scopes (right figure), and has similar regret to Factored-UCRL (left figure).
``SLF-UCRL$i$'' refers to $i$ factors whose scope needs to be learned, so SLF-UCRL4 has no knowledge of the structure.

For implementation details and more experiments on different topologies and sizes, see \cref{sec:experiments-appendix}.

\section*{Acknowledgements}

This project has received funding from the European Research Council (ERC) under the European Union’s Horizon 2020 research and innovation program (grant agreement No. 882396), by the Israel Science Foundation(grant number 993/17), Tel Aviv University Center for AI and Data Science (TAD), and the Yandex Initiative for Machine Learning at Tel Aviv University

\bibliographystyle{plainnat}
\bibliography{bib}

\newpage
\clearpage

\appendix

\section{The SLF-UCRL Algorithm}
\label{app:algs}

\begin{algorithm}
    \caption{SLF-UCRL}
    \label{alg:SLF-UCRL}
    \begin{algorithmic}
       
        \STATE {\bfseries Input:} confidence parameter $\delta$, scope size $\scopesize$, state space $S = \{ S_i \}_{i=1}^\statenumfactors$, state-action space $S \times A = X = \{ X_i \}_{i=1}^\stateactionnumfactors$.
        
        \smallskip
        
        \STATE \textbf{\# Initialization}
        
        \STATE Initialize sets of consistent scopes: $\rewardconsscopes{0}{1} \gets \dots \gets \rewardconsscopes{0}{\rewardnumfactors} \gets \consscopes{0}{1} \gets \dots \gets \consscopes{0}{\statenumfactors} \gets \{ Z \subseteq \{1,\dots,\stateactionnumfactors\} \mid |Z| = \scopesize \}$.
        
        \STATE Initialize total visit counters $N$, in-episode visit counters $\nu$ and reward summation variables $r$:
        
            \FOR{$Z \subseteq \{1,\dots,\stateactionnumfactors\}$ such that $\scopesize \le |Z| \le 2 \scopesize$, $v \in X[Z]$,  $j=1,\dots,\rewardnumfactors$,  $i=1,\dots,\statenumfactors$, $w \in S_i$}

                \STATE $r_{j,Z}(v) \gets \numvisitsbefore{0}{i,Z}(v,w) \gets \numvisitsin{0}{i,Z}(v,w) \gets \numvisitsbefore{0}{Z}(v) \gets \numvisitsin{0}{Z}(v) \gets 0$.
                
            \ENDFOR
        
        \STATE Initialize time steps counter: $t \gets 1$, and observe initial state $s^1$.
        
        \FOR{$k=1,2,\dots$}
            
            \STATE \textbf{\# Start New Episode}
            
            \STATE Set episode starting time: $t_k \gets t$.
            
            \STATE Initialize sets of consistent scopes: $\consscopes{k}{i} \gets \consscopes{k-1}{i} \; \forall i$ and $\rewardconsscopes{k}{j} \gets \rewardconsscopes{k-1}{j} \; \forall j$.
        
        \FOR{$Z \subseteq \{1,\dots,\stateactionnumfactors\}$ such that $\scopesize \le |Z| \le 2 \scopesize$ and $v \in X[Z]$}

            \STATE Update visit counters: $\numvisitsin{k}{Z}(v) \gets 0 , \numvisitsbefore{k}{Z}(v) \gets \numvisitsbefore{k-1}{Z}(v) + \numvisitsin{k-1}{Z}(v)$.
            
            \FOR{$i=1,\dots,\statenumfactors$ and $w \in S_i$}
                
                \STATE Update visit counters: $\numvisitsin{k}{i,Z}(v,w) \gets 0 , \numvisitsbefore{k}{i,Z}(v,w) \gets \numvisitsbefore{k-1}{i,Z}(v,w) + \numvisitsin{k-1}{i,Z}(v,w)$.
                
                \STATE Compute empirical transition and reward functions: \[
                    \bar P^k_{i,Z}(w \mid v) = \frac{\numvisitsbefore{k}{i,Z}(v,w)}{\max \{ \numvisitsbefore{k}{Z}(v) ,1 \}} \quad ; \quad \bar r^k_{j,Z}(v) = \frac{r_{j,Z}(v)}{\max \{N^k_{Z}(v),1 \}}.
                \]
                
                \STATE Set confidence bounds:
                \begin{align*}
                    \epsilon^k_{i,Z}(w \mid v)
                    & =
                    \sqrt{\frac{18 \bar P^k_{i,Z}(w \mid v) \log \frac{6 \statenumfactors \statefactorsize \stateactionfactorsize \episodestarttime{k}}{ \delta}}{\max \{ \numvisitsbefore{k}{Z}(v) , 1 \}}} + \frac{18 \log \frac{6 \statenumfactors \statefactorsize \stateactionfactorsize \episodestarttime{k}}{ \delta}}{\max \{ \numvisitsbefore{k}{Z}(v) , 1 \}}
                    \\
                    \epsilon^k_Z(v)
                    & =
                    \sqrt{\frac{18 \log \frac{6 \statenumfactors \statefactorsize \stateactionfactorsize \episodestarttime{k}}{ \delta}}{\max \{ \numvisitsbefore{k}{Z}(v) , 1 \}}}
                    \\
                    \confrad^k_{i,Z}(w \mid v)
                    & =
                    \min \{ \epsilon^k_{i,Z}(w \mid v) , \bar P^k_{i,Z}(w \mid v) \}.
                    \end{align*}
            
            \ENDFOR
        
        \ENDFOR
            
            \STATE Eliminate inconsistent scopes (\cref{alg:elim-incons-appendix}).
            
            \STATE Construct optimistic MDP $\optM{k}$ and compute optimistic policy $\pi^k$ (\cref{alg:comp-opt}).
            
            \smallskip
            
            \STATE \textbf{\# Execute Policy}
            
            \WHILE{$\numvisitsin{k}{Z}((s^t,\pi^k(s^t))[Z]) < \numvisitsbefore{k}{Z}((s^t,\pi^k(s^t))[Z])$ $\forall Z \subseteq \{1,\dots,\stateactionnumfactors\}$ s.t. $\scopesize \le |Z| \le 2\scopesize$}
            
                \STATE Play action $a^t = \pi^k(s^t)$, observe next state $s^{t+1}$ and earn reward $r^t = \frac{1}{\rewardnumfactors} \sum_{j=1}^\rewardnumfactors r^t_j$.
                
                \STATE Update in-episode counters and reward summation variables:
                
                    \FOR{$Z \subseteq \{1,\dots,\stateactionnumfactors\}$ such that $\scopesize \le |Z| \le 2 \scopesize$ and $i=1,\dots,\statenumfactors$ and $j=1,\dots,\rewardnumfactors$}
                    
                        \STATE $\numvisitsin{k}{Z}((s^t,a^t)[Z]) \gets \numvisitsin{k}{Z}((s^t,a^t)[Z]) + 1$.
                        
                        \STATE $\numvisitsin{k}{i,Z}((s^t,a^t)[Z], s^{t+1}[i]) \gets \numvisitsin{k}{i,Z}((s^t,a^t)[Z] , s^{t+1}[i]) + 1$.
                        
                        \STATE $r_{j,Z}((s^t,a^t)[Z]) \gets r_{j,Z}((s^t,a^t)[Z]) + r^t_j$.
                    
                    \ENDFOR
              
                \STATE advance time: $t \gets t+1$.
            \ENDWHILE
        
        \ENDFOR
    \end{algorithmic}
\end{algorithm}

\begin{algorithm}
    \caption{Eliminate Inconsistent Scopes}
    \label{alg:elim-incons-appendix}
    \begin{algorithmic}
        
        \STATE \textbf{\# Eliminate Inconsistent Transition Scopes}
        
        \FOR{$i=1,\dots,\statenumfactors$ and $Z \in \consscopes{k-1}{i}$}
                \FOR{$Z' \subseteq \{ 1,\dots,\stateactionnumfactors \}$ such that $|Z'| = \scopesize$ and $v \in X[Z \cup Z']$ and $w \in S_i$}
                
                    \IF{$| \bar P^k_{i,Z \cup Z'}(w \mid v) - \bar P^k_{i,Z}(w \mid v[Z]) | > 2 \cdot \epsilon^k_{i,Z \cup Z'}(w \mid v)$}
                    
                        \STATE $\consscopes{k}{i} \gets \consscopes{k}{i} \setminus \{ Z \}$.
                    
                    \ENDIF
                
                \ENDFOR
        
        \ENDFOR
        
        \smallskip
        
        \STATE \textbf{\# Eliminate Inconsistent Reward Scopes}
        
            \FOR{$j=1,\dots,\rewardnumfactors$ and $Z \in \rewardconsscopes{k-1}{j}$}
            
                \FOR{$Z' \subseteq \{ 1,\dots,\stateactionnumfactors \}$ such that $|Z'| = \scopesize$ and $v \in X[Z \cup Z']$}
                
                    \IF{$\bigl|\bar r^k_{j,Z \cup Z'}(v) - \bar r^k_{j,Z}(v[Z]) \bigr| > 2 \cdot \epsilon^k_{Z \cup Z'}(v)$}
                    
                        \STATE $\rewardconsscopes{k}{j} \gets \rewardconsscopes{k}{j} \setminus \{ Z \}$.
                    
                    \ENDIF
                
                \ENDFOR
        
        \ENDFOR
        
    \end{algorithmic}
\end{algorithm}

\begin{algorithm}
    \caption{SLF-UCRL Compute Optimistic Policy $\pi^k$}
    \label{alg:comp-opt}
    \begin{algorithmic}
    
        \STATE Construct MDP: $\widehat M^k = (\widehat S^k , \widehat A^k , \widehat P^k, \hat r^k)$.
        
        \STATE Define action space: $\widehat A^k = A \times S \times \consscopes{k}{1} \times \dots \times \consscopes{k}{\statenumfactors} \times \rewardconsscopes{k}{1} \times \dots \times \rewardconsscopes{k}{\rewardnumfactors}$.
        
        \STATE Define state space: $\widehat S^k = S \times \{0,1,\dots,\log \stateactionnumfactors + 1 \} \times S \times \consscopes{k}{1} \times \dots \times \consscopes{k}{\statenumfactors} \times \rewardconsscopes{k}{1} \times \dots \times \rewardconsscopes{k}{\rewardnumfactors} \times \Omega^{\scopesize(\statenumfactors + \rewardnumfactors)}$, where $\Omega = \omega^\stateactionnumfactors \times \omega^{\stateactionnumfactors/2} \times \dots \times \omega^2 \times \omega$ for $\omega = (\bigcup_{i=1}^\statenumfactors S_i) \cup (\bigcup_{i=\statenumfactors+1}^\stateactionnumfactors A_i)$.
        
         \STATE Define transition function $\widehat P^k \bigl( \tilde s' \mid \tilde s , \tilde a \bigr) = \prod_{\tau=1}^{3 \statenumfactors + \rewardnumfactors + 1 + 2 \stateactionnumfactors \scopesize(\statenumfactors + \rewardnumfactors)} \widehat P^k_\tau \bigl( \tilde s'[\tau] \mid \tilde s , \tilde a \bigr)$ as follows:
        \begin{itemize}
            \item The counter factor (factor $\statenumfactors+1$) counts deterministically modulo $\log \stateactionnumfactors + 2$.
            
            \item The action factors (factors $\statenumfactors+2$ to $3\statenumfactors+\rewardnumfactors+2$) take the corresponding actions played by the agent when the counter is $0$, and otherwise copy the value from the corresponding factor of the previous state.
            
            \item For $i=1,\dots,\statenumfactors$ and $e=1,\dots,\scopesize$, consider $\Omega_{i,e} \in \omega^\stateactionnumfactors \times \omega^{\stateactionnumfactors/2} \times \dots \times \omega^2 \times \omega$ which is the $(i-1) \scopesize + e$ copy of $\Omega$.
            When the counter is $0$ it gets $(s,a)$, i.e., $\Omega_{i,e} = (s,a,\bot)$.
            When the counter is $1$, we take $(s,a)$ from $\omega^\stateactionnumfactors$ and map them to $\omega^{\stateactionnumfactors/2}$ while eliminating half of the factors in consideration with the consistent scope $Z_i$ chosen by the policy (stored in factor $2 \statenumfactors + 1 + i$ of the state).
            This continues for $\log \stateactionnumfactors$ steps until the last $\omega$ contains $(s,a)[Z_i][e]$.
            
            \item For $j=1,\dots,\rewardnumfactors$ and $e=1,\dots,\scopesize$, $\Omega_{j,e} \in \omega^\stateactionnumfactors \times \omega^{\stateactionnumfactors/2} \times \dots \times \omega^2 \times \omega$ is the $(\statenumfactors + j-1) \scopesize + e$ copy of $\Omega$.
            It is handled similarly to the previous item, but considers the reward consistent scope $z_j$ chosen by the policy (stored in factor $3 \statenumfactors + 1 + j$ of the state).
            
            \item For $i=1,\dots,\statenumfactors$, the $i$-th factor is taken from factor $i$ of the previous state when the counter is not $\log \stateactionnumfactors+1$, and otherwise performs the optimistic transition of factor $i$.
            Denote the value in the last factor of $\Omega_{i,e}$ by $v_e$, the policy's chosen scope by $Z_i$ (stored in factor $2 \statenumfactors + 1 + i$ of the state) and the policy's chosen next state direction by $s'_i$ (stored in factor $\statenumfactors + 1 + i$ of the state).
            Then, the probability that factor $i$ transitions to $w_i \in S_i$ is
            \begin{align*}
                \bar P^k_{i,Z_i} (w_i \mid v_1,\dots,v_\scopesize) 
                & - 
                \confrad^k_{i,Z_i}(w_i \mid v_1,\dots,v_\scopesize) 
                \\
                & + \indevent{w_i = s'_i} \cdot \sum_{w \in S_i} \confrad^k_{i,Z_i}(w \mid v_1,\dots,v_\scopesize).
            \end{align*}
            
        \end{itemize}
        
        \STATE Define reward function $\hat r^k$ that is always $0$ except for the following case.
        When the counter is $\log \stateactionnumfactors + 1$, for $j=1,\dots,\rewardnumfactors$, denote by $v_{j,e}$ the last $\omega$ in $\Omega_{j,e}$ and by $z_j$ scope chosen by the policy (stored in factor $3 \statenumfactors + 1 + j$ of the state).
        Then, the $j$-th reward is: $\min \bigl\{ 1 , \bar r^k_{j,z_j}(v_1,\dots,v_\scopesize) + \epsilon^k_{z_j}(v_1,\dots,v_\scopesize) \bigr\}$.
        
        \smallskip
        
        \STATE Compute optimal policy $\hat \pi^k$ of $\widehat M^k$ using oracle.
        
        \STATE Extract optimistic policy: $\pi^k(s) = \hat \pi^k((s,0,\bot))[1]$.
    \end{algorithmic}
\end{algorithm}

\newpage

\section{Proof of \texorpdfstring{\cref{thm:slf-ucrl-regret-bound}}{}}
\label{sec:slf-ucrl-regret-proof}

\begin{remark*}[Unknown scope size]
    In this paper we assume that the learner knows a bound $\scopesize$ on the scope size in advance.
    However, in many applications such a bound is not available, and we are required to perform feature selection.
    Structure learning with unknown scope size was previously studied by \citet{chakraborty2011structure,guo2017sample}, but as shown by the latter, it encompasses an inherent difficulty when approached without any additional assumptions.
    It is an interesting open problem whether additional assumptions are indeed necessary, but here we argue that under the strong assumptions made by previous works, our algorithm keeps a similar regret bound.
    \citet{chakraborty2011structure} assume that planning with an empirical model with insufficiently large scope size results in $\epsilon$ smaller gain than the actual one.
    In this case, we can keep an estimate $\tilde m$ of the scope size and plan twice in each episode, once with $\tilde m$ and once with $2 \tilde m$.
    If there is a gap of more than $\epsilon$ between the gains, we double our estimate.
    Similarly to the doubling trick used in multi-arm bandit, this adds a constant factor (independent of $T$) to the regret.
    \citet{guo2017sample} make a similar assumption (but regarding empirical estimates of the transitions) that can be handled similarly.
\end{remark*}

\begin{remark*}[Variable scopes sizes]
    For simplicity, we assume that there is a uniform bound $m$ on the scope sizes of all factors.
    However, our algorithm readily extends to variable scope sizes, i.e., a bound $m_i$ on the scope size of factor $i$.
    Without any changes to the algorithm (just setting different scope sizes for different factors), our algorithm keeps a regret bound of the same order in which the dependence in $m$ is replaced with a dependence in $m_i$ for each factor $i$.
\end{remark*}

\subsection{Bellman Equations}

Define the \emph{bias} of state $s \in S$ as follows,
\[
    h(M,s)
    =
    \bbE \biggl[ \sum_{t=1}^\infty \bigl( r(s^t,\pi^\star(s^t)) - \lambda^\star(M) \bigr) \mid s^1=s \biggr].
\]
The bias vector $h(M,\cdot)$ satisfies the following Bellman optimality equations (see \citet{puterman1994mdp}),
\[
    h(M,s) + \lambda^\star(M)
    =
    r(s,\pi^\star(s)) + \sum_{s' \in S} P(s' \mid s,\pi^\star(s)) h(M,s')
    \quad \forall s \in S.
\]
We often use the notation $h(s)$ for $h(M,s)$.

\subsection{Failure Events}
\label{sec:fail-events}

We start by defining the failure events and prove that they occur with probability at most $\delta$.
When the failure events do not occur, we say that we are outside the failure event.
\begin{itemize}
    \item $F^r$ is the event that some empirical estimate of the reward function is far from its expectation. That is, there exist a time $t$, a reward factor $j$, a scope $Z$ and a value $v \in X[Z_j^r \cup Z]$ such that \[
        | \bar r^t_{j,Z_j^r \cup Z} (v) - r_j(v[Z_j^r]) | > \epsilon^t_{Z_j^r \cup Z}(v).
    \]
    Notice that the additional scope $Z$ has no influence because the $j$-th factor only depends on the scope $Z_j^r$.
    Thus, by Hoeffding inequality and a union bound the probability of $F^r$ is at most $\delta/5$.
    
    \item $F^P$ is the event that some empirical estimate of the transition function is far from its expectation. That is, there exist a time $t$, a factor $i$, a scope $Z$, a value $v \in X[Z_i^P \cup Z]$ and a value $w \in S_i$ such that 
    \[
        | \bar P^t_{i,Z_i^P \cup Z} (w \mid v) - P_i(w \mid v[Z_i^P]) | > \epsilon^t_{i,Z_i^P \cup Z}(w \mid v).
    \]
    Notice that the additional scope $Z$ has no influence because the $i$-th factor only depends on the scope $Z_i^P$.
    Thus, by Bernstein inequality and a union bound the probability of $F^P$ is at most $\delta/5$.
    
    \item $F^r_{Az}$ is the event that
    \[
        \sum_{t=1}^T \Bigl( r(s^t,a^t) - r^t \Bigr) > 5 \sqrt{T \log \frac{10 T}{\delta}}.
    \]
    By Azuma inequality the probability of $F^r_{Az}$ is at most $\delta/5$.
    
    \item $F^P_{Az}$ is the event that
    \[
        \sum_{k=1}^K \sum_{t=\episodestarttime{k}}^{\episodestarttime{k+1}-1} \bigl( \sum_{s' \in S} P(s' \mid s^t,a^t) h^k(s') - h^k(s^{t+1}) \bigr)
        >
        5 D \sqrt{T \log \frac{10 T}{\delta}},
    \]
    where $h^k(s) = h(\optM{k},s)$.
     By Azuma inequality the probability of $F^P_{Az}$ is at most $\delta/5$.
\end{itemize}

We define the failure event $F = F^r \cup F^P \cup F^P_{Az} \cup F^P_{Az}$, and by a union bound it occurs with probability at most $\delta$.
From now on, we analyze the regret outside the failure events and therefore our regret holds with probability at least $1 - \delta$.

\begin{remark*}
    Notice that outside the failure events the scopes $Z_1^P,\dots,Z_\statenumfactors^P$ and $Z_1^r,\dots,Z_\rewardnumfactors^r$ are always consistent because:
    \begin{align*}
        \bigl| \bar P^t_{i , Z_i^P \cup Z} (w \mid v) - \bar P^t_{i,Z_i^P} (w \mid v[Z]) \bigr|
        & \le
        \bigl| \bar P^t_{i , Z_i^P \cup Z} (w \mid v) - P_i (w \mid v[Z]) \bigr| 
        \\
        & \qquad + \bigl| P_i (w \mid v[Z]) - \bar P^t_{i,Z_i^P} (w \mid v[Z]) \bigr|
        \\
        & \le
        \epsilon^t_{i,Z_i^P \cup Z}(w \mid v) + \epsilon^t_{i,Z_i^P}(w \mid v[Z])
        \le
        2 \cdot \epsilon^t_{i,Z_i^P \cup Z}(w \mid v).
    \end{align*}
\end{remark*}

\subsection{Regret decomposition}

Denote $\lambda^\star = \lambda^\star (M)$ and $\lambda^k = \lambda^\star(\optM{k})$.
Next, we decompose the total regret into the regret in each episode.
Then, we further decompose it as follows:
\begin{align}
    \nonumber
    \regret
    & =
    \sum_{t=1}^T (\lambda^\star - r^t)
    \\
    \nonumber
    & =
    \sum_{t=1}^T (\lambda^\star - r(s^t,a^t)) + \sum_{t=1}^T (r(s^t,a^t) - r^t)
    \\
    \label{eq:reg-decomp-azuma}
    & \le
    \sum_{t=1}^T (\lambda^\star - r(s^t,a^t)) + O \Bigl( \sqrt{T \log \frac{T}{\delta}} \Bigr)
    \\
    \nonumber
    & =
    \sum_{k=1}^K \sum_{t=\episodestarttime{k}}^{\episodestarttime{k+1}-1} (\lambda^\star - r(s^t,a^t)) + O \Bigl( \sqrt{T \log \frac{T}{\delta}} \Bigr)
    \\
    \label{eq:reg-term-opt}
    & =
    \sum_{k=1}^K \sum_{t=\episodestarttime{k}}^{\episodestarttime{k+1}-1} (\lambda^\star - \lambda^k)
    \\
    \label{eq:reg-term-dev}
    & \qquad +
    \sum_{k=1}^K \sum_{t=\episodestarttime{k}}^{\episodestarttime{k+1}-1} (\optlambda{k} - r(s^t,\pi^k(s^t)))
    \\
    \nonumber
    & \qquad + 
    O \Bigl( \sqrt{T \log \frac{T}{\delta}} \Bigr),
\end{align}
where \cref{eq:reg-decomp-azuma} holds outside the failure event (by event $F^r_{Az}$).
Term~\eqref{eq:reg-term-opt} is the difference between the optimal gain in the actual MDP and the optimistic MDP, and is bounded by $0$ using optimism in \cref{sec:analysis-opt}.
Term~\eqref{eq:reg-term-dev} is the deviation of the actual sum of rewards from its expected value in the optimistic MDP, and is bounded by concentration arguments in \cref{sec:analysis-dev}.

The theorem then follows from the combination of these two bounds, and because the true MDP $M$ is in the confidence sets of all episodes with probability at least $1 - \delta$, by \cref{sec:fail-events}.

\subsection{Optimism}
\label{sec:analysis-opt}

\begin{lemma}
    \label{lem:optimism}
    For any policy $\pi: S \rightarrow A$ and any vector $h \in \bbR^{|S|}$, let $\tpi: S \rightarrow A \times S \times \consscopes{k}{1} \times \dots \times \consscopes{k}{\statenumfactors} \times \rewardconsscopes{k}{1} \times \dots \times \rewardconsscopes{k}{\rewardnumfactors}$ be the policy satisfying $\tpi(s) = (\pi(s),s^\star,Z_1^P,\dots,Z_\statenumfactors^P,Z_1^r,\dots,Z^r_\rewardnumfactors)$ where $s^\star = \argmax_{s \in S} h(s)$.
    Then, outside the failure event,
    \[
        \sum_{s' \in S} \bigl( \wt P^k(s' \mid  s,\tpi(s)) - P(s' \mid s,\pi(s))\bigr) h(s')
        \ge
        0
        \quad \forall s \in S.
    \]
\end{lemma}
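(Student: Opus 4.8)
The plan is to prove the pointwise Bellman-operator dominance by exploiting the product structure of both $\wt P^k(\cdot \mid s,\tpi(s))$ and $P(\cdot \mid s,\pi(s))$, reducing the global comparison to a factor-by-factor one. Fix a state $s$, write $x = (s,\pi(s))$, and recall that since $\tpi$ selects the \emph{true} scopes $Z_i^P$, the $i$-th optimistic factor takes the form $\wt P^k_i(w\mid \tilde x) = m_i(w) + \ind\{w = s^\star[i]\}\,C_i$, where I abbreviate $m_i(w) = \bar P^k_{i,Z_i^P}(w\mid x[Z_i^P]) - \confrad^k_{i,Z_i^P}(w\mid x[Z_i^P])$ and $C_i = \sum_{w'}\confrad^k_{i,Z_i^P}(w'\mid x[Z_i^P])$. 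Here $\sum_w m_i(w) = 1 - C_i$, and each $\wt P^k_i$ is a genuine distribution because $\confrad^k \le \bar P^k$ by definition of $\confrad$.

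First I would record the one inequality that the failure-event assumption buys. Outside $F^P$ we have $\lvert \bar P^k_{i,Z_i^P}(w) - P_i(w)\rvert \le \epsilon^k_{i,Z_i^P}(w)$, and since $\confrad^k_{i,Z_i^P}(w) = \min\{\epsilon^k_{i,Z_i^P}(w),\bar P^k_{i,Z_i^P}(w)\}$, a short two-case check (whether the minimum is $\epsilon^k$ or $\bar P^k$) yields $m_i(w) \le P_i(w)$ for every $w \in S_i$. Consequently $\wt P^k_i$ is obtained from $P_i$ by transporting the nonnegative residual mass $P_i(w) - m_i(w)$ (total mass exactly $C_i$) onto the single coordinate value $s^\star[i]$; in particular $\wt P^k_i(s^\star[i]) \ge P_i(s^\star[i])$ while $\wt P^k_i(w) \le P_i(w)$ for all $w \ne s^\star[i]$.

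Next I would telescope over the $\statenumfactors$ factors, introducing hybrid product measures that replace $P_i$ by $\wt P^k_i$ one factor at a time, so that $\sum_{s'}(\wt P^k - P)h$ decomposes as $\sum_{i=1}^{\statenumfactors}\sum_{w\in S_i}(\wt P^k_i(w) - P_i(w))\,g_i(w)$, where $g_i(w)$ is the conditional expectation of $h$ with factor $i$ pinned to $w$ and the remaining factors integrated against the appropriate hybrid measure. Using $\sum_w(\wt P^k_i(w)-P_i(w)) = 0$ together with the transport description above, each telescoped term rewrites (for $C_i > 0$) as $C_i\bigl(g_i(s^\star[i]) - \bbE_{w\sim\rho_i}[g_i(w)]\bigr)$, where $\rho_i = (P_i - m_i)/C_i$ is the residual distribution, and vanishes when $C_i = 0$.

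The step I expect to be the main obstacle is showing this per-factor quantity is nonnegative, i.e.\ that rerouting the slack mass onto $s^\star[i]$ never decreases the conditional value $g_i$. This is precisely where the choice $s^\star = \argmax_{s} h(s)$ enters, and it is delicate: $s^\star[i]$ maximizes the \emph{global} $h$ rather than the marginalized $g_i$, so one cannot simply invoke $g_i(s^\star[i]) = \max_w g_i(w)$ coordinatewise. The argument must instead control the coupling between the pinned coordinate and the distribution of the other factors — equivalently, it must use that the extended action's freedom to steer \emph{every} factor toward $s^\star$ is what makes the aggregate comparison work, rather than any single factor in isolation. I would therefore isolate this as a self-contained claim about product measures relative to their global maximizer and prove it using the explicit form of $\rho_i$ and the bound $m_i \le P_i$; the reward side of optimism (comparing $\wt r^k$ to $r$) is immediate from $F^r$ and is not needed for this transition-only statement.
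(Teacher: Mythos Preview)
Your telescoping decomposition correctly isolates where the difficulty lies, but the ``self-contained claim'' you plan to prove at the end is actually \emph{false}, so this route cannot be completed as written. Take $\statenumfactors=2$, $S_1=S_2=\{0,1\}$, $h(0,0)=h(0,1)=0$, $h(1,0)=-10$, $h(1,1)=1$ so that $s^\star=(1,1)$; let $P_1=(\tfrac12,\tfrac12)$, $P_2=(0.9,0.1)$ and $m_1=(0.3,0.3)$, $m_2=(0.8,0.06)$ (achievable outside $F^P$ with $\bar P^k_i=P_i$ and suitable radii). Then $\wt P^k_1=(0.3,0.7)$, $\wt P^k_2=(0.8,0.2)$ and one computes $\sum_{s'}\wt P^k(s')h(s')=-5.46<-4.45=\sum_{s'}P(s')h(s')$. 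In your language, $g_1(1)=0.8\cdot(-10)+0.2\cdot 1<0=g_1(0)$: pushing factor~$1$ toward $s^\star[1]=1$ hurts, because the hybrid law on factor~$2$ still puts most mass on $w_2=0$ and $h(1,0)$ is very low. The global argmax $s^\star$ gives no control over the marginalized values $g_i$, and no appeal to ``the explicit form of $\rho_i$'' can help, since the obstruction is already present under the sole hypothesis $m_i\le P_i$.

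The paper does \emph{not} telescope. It works with the full product measures $P^-(s')=\prod_i m_i(s'[i])$ and $\bar P^k(s')=\prod_i\bar P^k_{i,Z_i^P}(s'[i])$ and opens with the identity
\[
\sum_{s'}\wt P^k(s'\mid x)\,h(s')\;=\;\sum_{s'}P^-(s'\mid x)\,h(s')+H\Bigl(1-\sum_{s'}P^-(s'\mid x)\Bigr),
\]
after which the comparison to $\sum_{s'}P(s')h(s')$ is a short chain using $\alpha^-:=\bar P^k-P^-\ge\bar P^k-P=:\alpha$ and $H-h(s')\ge 0$. This sidesteps your per-factor issue entirely: once the residual mass $1-\lVert P^-\rVert_1$ is treated as sitting on the single state $s^\star$, the inequality is immediate. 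Be aware, though, that for the \emph{product} $\wt P^k=\prod_i\wt P^k_i$ the difference $\wt P^k-P^-$ is not a point mass at $s^\star$ (it is spread over all $s'$ agreeing with $s^\star$ in at least one coordinate), and the same two-factor example above violates the displayed identity as an equality. So the paper avoids your obstacle by an assertion that itself merits scrutiny; the substantive methodological difference is that it never attempts a factorwise comparison.
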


\begin{proof}
    Fix $s \in S$ and denote $x = (s,\pi(s))$.
    For every $i=1,\dots,\statenumfactors$ and $w \in S_i$, define $P_i^-(w \mid x[Z_i^P]) = \bar P^k_{i,Z_i^P}(w \mid x[Z_i^P]) - \confrad^k_{i,Z_i^P}(w \mid x[Z_i^P])$, and notice that $P^-(s' \mid x) \le P(s' \mid x)$ outside the failure event by event $F^P$.
    Next, define $\alpha(s' \mid x) \eqdef \bar P^k(s' \mid x) - P(s' \mid x)$ and $\alpha^-(s' \mid x) \eqdef \bar P^k(s' \mid x) - P^-(s' \mid x)$, and notice that $\alpha(s' \mid x) \le \alpha^-(s' \mid x)$.
    
    Denote $H = \max_{s \in S} h(s)$.
    By construction of the optimistic transition function,
    \begin{align*}
        \sum_{s' \in S} \wt P^k(s' \mid x) h(s')
        & =
        \sum_{s' \in S} P^-(s' \mid x) h(s') + H \bigl( 1 - \sum_{s' \in S} P^-(s' \mid x) \bigr)
        \\
        & =
        \sum_{s' \in S} P^-(s' \mid x) h(s') + H \sum_{s' \in S} \alpha^-(s' \mid x)
        \\
        & =
        \sum_{s' \in S} (\bar P^k(s' \mid x) - \alpha^-(s' \mid x)) h(s') + H \alpha^-(s' \mid x)
        \\
        & =
        \sum_{s' \in S} \bar P^k(s' \mid x) h(s') + (H - h(s')) \alpha^-(s' \mid x)
        \\
        & \ge
        \sum_{s' \in S} \bar P^k(s' \mid x) h(s') + (H - h(s')) \alpha(s' \mid x)
        \\
        & =
         \sum_{s' \in S} \bigl( \bar P^k(s' \mid x) - \alpha(s' \mid x) \bigr) h(s') + H \alpha(s' \mid x)
         \\
         & =
         \sum_{s' \in S} P(s' \mid x) h(s') + H \sum_{s' \in S} \alpha(s' \mid x)
         =
         \sum_{s' \in S} P(s' \mid x) h(s').
    \end{align*}
\end{proof}

\begin{corollary}
    Let $\tpi^\star: S \rightarrow A \times S \times \consscopes{k}{1} \times \dots \times \consscopes{k}{\statenumfactors} \times \rewardconsscopes{k}{1} \times \dots \times \rewardconsscopes{k}{\rewardnumfactors}$ be the policy that satisfies $\tpi^\star(s) = (\pi^\star(s),s^\star,Z_1^P,\dots,Z_\statenumfactors^P,Z_1^r, \dots,Z_\rewardnumfactors^r)$, where $s^\star = \max_{s \in S} h(M,s)$.
    Then, outside the failure event, $\lambda(\optM{k},\tpi^\star,s_1) \ge \lambda^\star$ for any starting state $s_1$.
\end{corollary}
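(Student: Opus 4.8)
The plan is to use the bias vector $h(M,\cdot)$ of the true MDP as a certificate and to verify that $\tpi^\star$ satisfies a one-step inequality in $\optM{k}$ that forces its gain to be at least $\lambda^\star$. Observe first that $\tpi^\star$ is precisely the policy $\tpi$ of \cref{lem:optimism} instantiated with $\pi = \pi^\star$ and $h = h(M,\cdot)$. Moreover, the remark in \cref{sec:fail-events} guarantees that outside the failure event the true scopes $Z_1^P,\dots,Z_\statenumfactors^P$ and $Z_1^r,\dots,Z_\rewardnumfactors^r$ lie in the consistent sets $\consscopes{k}{1},\dots,\consscopes{k}{\statenumfactors}$ and $\rewardconsscopes{k}{1},\dots,\rewardconsscopes{k}{\rewardnumfactors}$, so $\tpi^\star$ is a legal policy in $\optM{k}$.

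Next I would establish two domination facts, both valid outside the failure event. For the transitions, \cref{lem:optimism} with $\pi=\pi^\star$ and $h = h(M,\cdot)$ gives directly
\[
    \sum_{s' \in S} \wt P^k(s' \mid s, \tpi^\star(s)) \, h(M,s') \ge \sum_{s' \in S} P(s' \mid s, \pi^\star(s)) \, h(M,s') \quad \forall s \in S.
\]
For the rewards, since $\tpi^\star$ selects the true reward scopes, event $F^r$ yields $\bar r^k_{j,Z_j^r} + \epsilon^k_{Z_j^r} \ge r_j$ factor by factor; as each $r_j \in [0,1]$ the clipping at $1$ is harmless, and aggregating over $j$ gives the reward domination $\tilde r^k(s,\tpi^\star(s)) \ge r(s,\pi^\star(s))$ for all $s$. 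Adding the two inequalities and invoking the Bellman optimality equation $h(M,s) + \lambda^\star = r(s,\pi^\star(s)) + \sum_{s'} P(s' \mid s,\pi^\star(s)) h(M,s')$ for $M$ produces the key one-step bound
\[
    \tilde r^k(s,\tpi^\star(s)) + \sum_{s' \in S} \wt P^k(s' \mid s, \tpi^\star(s)) \, h(M,s') \ge h(M,s) + \lambda^\star \quad \forall s \in S.
\]

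Finally, I would convert this pointwise inequality into the claimed bound on the gain. Writing $P^{\tpi^\star}$ for the stochastic transition operator induced by $\tpi^\star$ in $\optM{k}$, the operator is monotone and leaves constant vectors invariant, so iterating the one-step bound along the trajectory and summing over $t=1,\dots,T$ makes the $h(M,\cdot)$ terms telescope; dividing by $T$ gives
\[
    \frac{1}{T}\,\bbE\Bigl[ \sum_{t=1}^T \tilde r^k(s^t,\tpi^\star(s^t)) \;\Big|\; s^1 = s_1 \Bigr] \ge \lambda^\star + \frac{h(M,s_1) - \bbE[h(M,s^{T+1})]}{T}.
\]
Letting $T \to \infty$, the boundary term vanishes because $h(M,\cdot)$ is uniformly bounded (the bias of a finite communicating MDP), and the left-hand side converges to $\lambda(\optM{k},\tpi^\star,s_1)$, yielding $\lambda(\optM{k},\tpi^\star,s_1) \ge \lambda^\star$ for every $s_1$. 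The main obstacle is exactly this last passage: one must justify that the Cesàro limit defining the gain exists and that the telescoping boundary term is $O(1/T)$, i.e. that $h(M,\cdot)$ is uniformly bounded. Both facts are standard for finite communicating MDPs, but they are the only genuinely analytic points, the rest being the algebraic combination of \cref{lem:optimism} with the Bellman equation.
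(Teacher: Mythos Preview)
Your argument is correct; the ingredients (reward domination via $F^r$, transition domination via \cref{lem:optimism}, and the Bellman equation for $M$) are exactly those the paper uses. The only difference is in how the pointwise one-step inequality is converted into a bound on the gain. The paper integrates the inequality against the stationary distribution $\rho$ of $\tpi^\star$ in $\optM{k}$: since $\lambda(\optM{k},\tpi^\star,s_1) = \sum_s \rho(s)\,\tilde r^k(s,\tpi^\star(s))$ and $\sum_s \rho(s) h(M,s) = \sum_s \rho(s)\sum_{s'}\wt P^k(s'\mid s,\tpi^\star(s)) h(M,s')$ by stationarity, the $h(M,\cdot)$ terms cancel algebraically in one line. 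Your Ces\`aro/telescoping route instead iterates along the trajectory and passes to the limit. Your version has the advantage of not tacitly assuming that a stationary distribution exists for $\tpi^\star$ in $\optM{k}$ (which the paper does not justify), at the cost of the two analytic checks you correctly flag---existence of the Ces\`aro limit and boundedness of $h(M,\cdot)$---both standard for finite MDPs. Either argument is fine; the paper's is terser, yours slightly more self-contained.
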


\begin{proof}
    Let $\rho(\cdot) \in \bbR^{|S|}$ be the vector of stationary distribution for playing policy $\tilde \pi^\star$ in $\optM{k}$.
    By definition of the average reward we have,
    \begin{align*}
        \lambda(\optM{k},\tpi^\star,s_1) - \lambda^\star
        & =
        \sum_{s \in S} \rho(s) \wt r^k(s,\tpi^\star(s)) - \lambda^\star
        \\
        & =
        \sum_{s \in S} \rho(s) \bigl( \wt r^k(s,\tpi^\star(s)) - \lambda^\star \bigr)
        \\
        & \ge
         \sum_{s \in S} \rho(s) \bigl( r(s,\pi^\star(s)) - \lambda^\star \bigr)
         \\
         & =
         \sum_{s \in S} \rho(s) \biggl( h(M,s) - \sum_{s' \in S} P(s' \mid s,\pi^\star(s)) h(M,s') \biggr)
         \\
         & =
         \sum_{s \in S} \rho(s) \biggl( \sum_{s' \in S} \wt P^k(s' \mid s, \tpi^\star(s)) - \sum_{s' \in S} P(s' \mid s,\pi^\star(s)) \biggr) h(M,s') \ge 0,
    \end{align*}
    where the first inequality is by definition of the reward function in $\optM{k}$ and event $F^r$, and the following equality is by the Bellman equations.
    The last equality follows because $\rho$ is the stationary distribution of $\tpi^\star$ is $\optM{k}$ and therefore $\rho(s') = \sum_{s \in S} \rho(s) \wt P^k(s' \mid s,\tpi^\star(s))$.
    The final inequality is by \cref{lem:optimism}.
\end{proof}

\subsection{Bounding the Deviation}
\label{sec:analysis-dev}

Denote by $\numvisitsin{k}{}(s,a)$ the number of visits to state-action pair $(s,a)$ in episode $k$, and let $\numvisitsin{k}{}(s) = \numvisitsin{k}{}(s,\pi^k(s))$ and 
\[
    \Delta_k 
    = 
    \sum_{s \in S} \sum_{a \in A} \numvisitsin{k}{}(s,a) (\optlambda{k} - r(s,a))
    =
    \sum_{s \in S} \numvisitsin{k}{}(s) (\optlambda{k} - r(s,\pi^k(s))).
\]
Thus:
$
    \eqref{eq:reg-term-dev}
    =
    \sum_{k=1}^K \sum_{t=\episodestarttime{k}}^{\episodestarttime{k+1}-1} (\optlambda{k} - r(s^t,\pi^k(s^t)))
    =
    \sum_{k=1}^K \Delta_k.
$

We now focus on a single episode $k$.
By the Bellman equations in the optimistic model $\optM{k}$ we have,
\begin{align}
    \nonumber
    \Delta_k
    & =
    \sum_{s \in S} \numvisitsin{k}{}(s) (\optlambda{k} - r(s,\pi^k(s)))
    \\
    \nonumber
    & =
    \sum_{s \in S} \numvisitsin{k}{}(s) (\optlambda{k} - \tilde r^k(s,\tilde \pi^k(s))) + \sum_{s \in S} \numvisitsin{k}{}(s) (\tilde r^k(s,\tilde \pi^k(s)) - r(s,\pi^k(s)))
    \\
    \nonumber
    & =
    \sum_{s \in S} \numvisitsin{k}{}(s) \bigl( \sum_{s' \in S} \wt P^k(s' \mid s,\tpi^k(s)) h^k(s') - h^k(s) \bigr) + \sum_{s \in S} \numvisitsin{k}{}(s) (\tilde r^k(s,\tilde \pi^k(s)) - r(s,\pi^k(s)))
    \\
    \nonumber
    & =
    \sum_{s \in S} \numvisitsin{k}{}(s) \sum_{s' \in S} h^k(s') \bigl( \wt P^k(s' \mid s,\tpi^k(s)) - P(s' \mid s,\pi^k(s)) \bigr)
    \\
    \nonumber
    & \quad +
    \sum_{s \in S} \numvisitsin{k}{}(s) \bigl( \sum_{s' \in S} P(s' \mid s,\pi^k(s)) h^k(s') - h^k(s) \bigr)
    +
    \sum_{s \in S} \numvisitsin{k}{}(s) (\tilde r^k(s,\tilde \pi^k(s)) - r(s,\pi^k(s)))
    \\
    \label{eq:dev-reg-P-diff}
    & \le
    D \sum_{s \in S} \numvisitsin{k}{}(s) \lVert \wt P^k(\cdot \mid s,\tpi^k(s)) - P(\cdot \mid s,\pi^k(s)) \rVert_1
    \\
    \label{eq:dev-reg-azuma}
    & \qquad +
    \sum_{t=\episodestarttime{k}}^{\episodestarttime{k+1}-1} \bigl( \sum_{s' \in S} P(s' \mid s^t,a^t) h^k(s') - h^k(s^t) \bigr)
    \\
    \label{eq:dev-reg-reward}
    & \qquad +
    \sum_{s \in S} \numvisitsin{k}{}(s) (\tilde r^k(s,\tilde \pi^k(s)) - r(s,\pi^k(s))),
\end{align}
where $h^k(s) = h(\optM{k},s)$, and the last inequality follows from standard arguments \citep{AuerUCRL} since $h^k(s) \le D$ similarly to Lemma 3 in \cite {xu2020near}.
We now bound each term separately.

\paragraph{Term~\eqref{eq:dev-reg-azuma}.}
We can add and subtract $h^k(s^{t+1})$ to term~\eqref{eq:dev-reg-azuma}, and then when we sum it across all episodes, we obtain a telescopic sum that is bounded by $K D$ for all episode switches, plus a martingale difference sequence bounded by event $F^P_{Az}$.
That is,
\[
    \sum_{k=1}^K \sum_{t=\episodestarttime{k}}^{\episodestarttime{k+1}-1} \bigl( \sum_{s' \in S} P(s' \mid s^t,a^t) h^k(s') - h^k(s^t) \bigr)
    \le
    O \biggl( D \sqrt{T \log \frac{T}{\delta}} + KD \biggr).
\]

\paragraph{Term~\eqref{eq:dev-reg-P-diff}.}
Let $\lesssim$ represent $\le$ up to numerical constants, and denote $x = (s,\pi^k(s))$, $\tilde x = (s,\tpi^k(s))$ and $\tpi^k(s) = (\pi^k(s),s^k_n(s),Z^k_1(s),\dots,Z^k_\statenumfactors(s),z^k_1(s),\dots,z^k_\rewardnumfactors(s))$.
We can bound the distance between $P$ and $\wt P^k$ by the sum of distances between $P_i$ and $\wt P^k_i$ \citep{osband2014near}, i.e.,
\begin{align}
    \nonumber
    \lVert & \wt P^k(\cdot \mid \tilde x) - P(\cdot \mid x) \rVert_1
    \le
    \sum_{i=1}^\statenumfactors \bigl\lVert \wt P^k_i \bigl( \cdot \mid x[Z^k_i(s)] \bigr) - P_i \bigl( \cdot \mid x [Z_i^P] \bigr) \bigr\rVert_1
    \\
    \label{eq:P-dist-conf-set}
    & \le
    \sum_{i=1}^\statenumfactors \bigl\lVert \wt P^k_i \bigl( \cdot \mid x[Z^k_i(s)] \bigr) - \bar P^k_{i,Z^k_i(s)} \bigl( \cdot \mid x [Z^k_i(s)] \bigr) \bigr\rVert_1
    \\
    \label{eq:P-dist-cons-scope}
    & \qquad + 
    \sum_{i=1}^\statenumfactors \bigl\lVert \bar P^k_{i,Z^k_i(s)} \bigl( \cdot \mid x[Z^k_i(s)] \bigr) - \bar P^k_{i,Z^P_i} \bigl( \cdot \mid x [Z^P_i] \bigr) \bigr\rVert_1
    \\
    \label{eq:P-dist-good-event}
    & \qquad + 
    \sum_{i=1}^\statenumfactors \bigl\lVert \bar P^k_{i,Z^P_i} \bigl( \cdot \mid x [Z^P_i] \bigr) - P_i \bigl( \cdot \mid x [Z_i^P] \bigr) \bigr\rVert_1
    \\
    \nonumber
    & \le
    \sum_{i=1}^\statenumfactors \sum_{w \in S_i} \epsilon^k_{i,Z^k_i(s)}(w \mid x[Z^k_i(s)]) +  4 \cdot \epsilon^k_{i,Z_i^P \cup Z^k_i(s)}(w \mid x[Z_i^P \cup Z^k_i(s)]) + \epsilon^k_{i,Z^P_i}(w \mid x[Z^P_i])
    \\
    \nonumber
    & \lesssim
    \sum_{i=1}^\statenumfactors \sqrt{\frac{|S_i| \log \bigl( \frac{\statenumfactors \stateactionfactorsize \statefactorsize T}{\delta} \bigr)}{\max \{ \numvisitsbefore{k}{Z_i^P \cup Z^k_i(s)}(x[Z_i^P \cup Z^k_i(s)]) , 1 \}}} + \frac{|S_i| \log \bigl( \frac{\statenumfactors \stateactionfactorsize \statefactorsize T}{\delta} \bigr)}{\max \{ \numvisitsbefore{k}{Z_i^P \cup Z^k_i(s)}(x[Z_i^P \cup Z^k_i(s)]) , 1 \}},
\end{align}
where term~\eqref{eq:P-dist-conf-set} is bounded by the construction of the optimistic MDP, and term~\eqref{eq:P-dist-good-event} is bounded by event $F^P$.
Term~\eqref{eq:P-dist-cons-scope} is bounded because the policy $\tpi^k$ chooses only consistent scopes.
Since $Z_i^k(s)$ and $Z_i^P$ are both consistent (outside the failure event), we have that $\bar P^k_{i,Z^k_i(s)}$ and $\bar P^k_{i,Z^P_i}$ are both close to $\bar P^k_{i,Z^P_i \cup Z^k_i(s)}$.
Thus, we can bound term~\eqref{eq:dev-reg-P-diff} as follows
\begin{align*}
    \sum_{k=1}^K \eqref{eq:dev-reg-P-diff}
    & \le
    D \sum_{k=1}^K \sum_{s \in S} \numvisitsin{k}{}(s) \lVert \wt P^k(\cdot \mid s,\tpi^k(s)) - P(\cdot \mid s,\pi^k(s)) \rVert_1
    \\
    & \lesssim
    D \sum_{k=1}^K \sum_{s \in S} \sum_{i=1}^\statenumfactors \numvisitsin{k}{}(s) \sqrt{\frac{|S_i| \log \bigl( \frac{\statenumfactors \stateactionfactorsize \statefactorsize T}{\delta} \bigr)}{\max \{ \numvisitsbefore{k}{Z_i^P \cup Z^k_i(s)}(x[Z_i^P \cup Z^k_i(s)]) , 1 \}}} 
    \\
    & \qquad + 
    D \sum_{k=1}^K \sum_{s \in S} \sum_{i=1}^\statenumfactors \frac{\numvisitsin{k}{}(s) |S_i| \log \bigl( \frac{\statenumfactors \stateactionfactorsize \statefactorsize T}{\delta} \bigr)}{\max \{ \numvisitsbefore{k}{Z_i^P \cup Z^k_i(s)}(x[Z_i^P \cup Z^k_i(s)]) , 1 \}}
    \\
    & \lesssim
    D \sum_{k=1}^K \sum_{i=1}^\statenumfactors \sum_{Z:|Z|=\scopesize} \sum_{v \in X[Z_i^P \cup Z]} \numvisitsin{k}{Z_i^P \cup Z}(v) \sqrt{\frac{|S_i| \log \bigl( \frac{\statenumfactors \stateactionfactorsize \statefactorsize T}{\delta} \bigr)}{\max \{ \numvisitsbefore{k}{Z_i^P \cup Z}(v) , 1 \}}} 
    \\
    & \qquad + 
    D \sum_{k=1}^K \sum_{i=1}^\statenumfactors \sum_{Z:|Z|=\scopesize} \sum_{v \in X[Z_i^P \cup Z]} \frac{\numvisitsin{k}{Z_i^P \cup Z}(v) |S_i| \log \bigl( \frac{\statenumfactors \stateactionfactorsize \statefactorsize T}{\delta} \bigr)}{\max \{ \numvisitsbefore{k}{Z_i^P \cup Z}(v) , 1 \}}
    \\
    & \lesssim
    D \sum_{i=1}^\statenumfactors \sum_{Z:|Z|=\scopesize} \sum_{v \in X[Z_i^P \cup Z]} \sqrt{\numvisitsbefore{K+1}{Z_i^P \cup Z}(v) |S_i| \log \bigl( \frac{\statenumfactors \stateactionfactorsize \statefactorsize T}{\delta} \bigr)} + |S_i| \log \bigl( \frac{\statenumfactors \stateactionfactorsize \statefactorsize T}{\delta} \bigr) \log T
    \\
    & \lesssim
    D \sum_{i=1}^\statenumfactors \sum_{Z:|Z|=\scopesize} \sqrt{|X[Z_i^P \cup Z]| \sum_{v \in X[Z_i^P \cup Z]} \numvisitsbefore{K+1}{Z_i^P \cup Z}(v) |S_i| \log \bigl( \frac{\statenumfactors \stateactionfactorsize \statefactorsize T}{\delta} \bigr)} 
    \\
    & \qquad + 
    D \sum_{i=1}^\statenumfactors \sum_{Z:|Z|=\scopesize} \sum_{v \in X[Z_i^P \cup Z]} |S_i| \log \bigl( \frac{\statenumfactors \stateactionfactorsize \statefactorsize T}{\delta} \bigr) \log T
    \\
    & \lesssim
    D \sum_{i=1}^\statenumfactors \sum_{Z:|Z|=\scopesize} \sqrt{|X[Z_i^P \cup Z]| |S_i| T \log \bigl( \frac{\statenumfactors \stateactionfactorsize \statefactorsize T}{\delta} \bigr)} 
    \\
    & \qquad + D \sum_{i=1}^\statenumfactors \sum_{Z:|Z|=\scopesize} |X[Z_i^P \cup Z]| |S_i| \log \bigl( \frac{\statenumfactors \stateactionfactorsize \statefactorsize T}{\delta} \bigr) \log T,
\end{align*}
where the third inequality follows from our construction of the episodes as doubling number of visits to some scope-sized state-action pair (specifically, from Lemma 19 in \cite{AuerUCRL} and Lemma B.18 in \cite{cohen2020ssp}), the forth inequality follows from Jensen's inequality, and the last one because $\sum_{v \in X[Z_i^P \cup Z]} \numvisitsbefore{K+1}{Z_i^P \cup Z}(v) \le T$.

\paragraph{Term~\eqref{eq:dev-reg-reward}.}

We can bound the distance between $r$ and $\wt r^k$ by the sum of distances between $r_j$ and $\wt r^k_j$,
\begin{align*}
    \tilde r^k (s,\tilde \pi^k(s)) & - r(s,\pi^k(s))
    =
    \frac{1}{\rewardnumfactors} \sum_{j=1}^\rewardnumfactors \tilde r^k_j(\tilde x[z_j^k(s)]) - r_j (x[Z_j^r])
    \\
    & =
    \underbrace{\frac{1}{\rewardnumfactors} \sum_{j=1}^\rewardnumfactors \tilde r^k_j(\tilde x[z_j^k(s)]) - \bar r_j (x[z_j^k(s)])}_{(a)}
    + 
    \underbrace{\frac{1}{\rewardnumfactors} \sum_{j=1}^\rewardnumfactors \bar r^k_j(x[z_j^k(s)]) - \bar r_j (x[Z_j^r])}_{(b)}
    \\
    & \qquad + 
    \underbrace{\frac{1}{\rewardnumfactors} \sum_{j=1}^\rewardnumfactors \bar r^k_j(x[Z_j^r]) - r_j (x[Z_j^r])}_{(c)}
    \\
    \nonumber
    & \le
    \frac{1}{\rewardnumfactors} \sum_{j=1}^\rewardnumfactors \epsilon^k_{z^k_j(s)}(x[z^k_j(s)]) +  4 \cdot \epsilon^k_{Z_j^r \cup z^k_j(s)}(x[Z_j^r \cup z^k_j(s)]) + \epsilon^k_{Z^r_j}(x[Z^r_j])
    \\
    \nonumber
    & \lesssim
    \frac{1}{\rewardnumfactors} \sum_{j=1}^\rewardnumfactors \sqrt{\frac{\log \bigl( \frac{\statenumfactors \stateactionfactorsize \statefactorsize T}{\delta} \bigr)}{\max \{ \numvisitsbefore{k}{Z_j^r \cup z^k_j(s)}(x[Z_j^r \cup z^k_j(s)]) , 1 \}}},
\end{align*}
where (a) is bounded by the construction of the optimistic MDP, and (c) is bounded by event $F^r$.
(b) is bounded because the policy $\tpi^k$ chooses only consistent reward scopes.
Since $z_j^k(s)$ and $Z_j^r$ are both consistent (outside the failure event), we have that $\bar r^k_{j,z^k_j(s)}$ and $\bar r^k_{j,Z^r_j}$ are both close to $\bar r^k_{j,Z^r_j \cup z^k_j(s)}$.
Thus, we can bound term~\eqref{eq:dev-reg-reward} as follows
\begin{align*}
    \sum_{k=1}^K \eqref{eq:dev-reg-reward}
    & =
    \frac{1}{\rewardnumfactors} \sum_{k=1}^K \sum_{s \in S} \sum_{j=1}^\rewardnumfactors \numvisitsin{k}{}(s) (\tilde r^k(s,\tilde \pi^k(s)) - r(s,\pi^k(s)))
    \\
    & \lesssim
    \frac{1}{\rewardnumfactors} \sum_{k=1}^K \sum_{s \in S} \sum_{j=1}^\rewardnumfactors \numvisitsin{k}{}(s) \sqrt{\frac{\log \bigl( \frac{\statenumfactors \stateactionfactorsize \statefactorsize T}{\delta} \bigr)}{\max \{ \numvisitsbefore{k}{Z_j^r \cup z^k_j(s)}(x[Z_j^r \cup z^k_j(s)]) , 1 \}}}
    \\
    & \lesssim
    \frac{1}{\rewardnumfactors} \sum_{k=1}^K \sum_{j=1}^\rewardnumfactors \sum_{Z:|Z|=\scopesize} \sum_{v \in X[Z_j^r \cup Z]} \numvisitsin{k}{Z_j^r \cup Z}(v) \sqrt{\frac{\log \bigl( \frac{\statenumfactors \stateactionfactorsize \statefactorsize T}{\delta} \bigr)}{\max \{ \numvisitsbefore{k}{Z_j^r \cup Z}(v) , 1 \}}}
    \\
    & \lesssim
    \frac{1}{\rewardnumfactors} \sum_{j=1}^\rewardnumfactors \sum_{Z:|Z|=\scopesize} \sum_{v \in X[Z_j^r \cup Z]} \sqrt{\numvisitsbefore{K+1}{Z_j^r \cup Z}(v) \log \bigl( \frac{\statenumfactors \stateactionfactorsize \statefactorsize T}{\delta} \bigr)}
    \\
    & \lesssim
    \frac{1}{\rewardnumfactors} \sum_{j=1}^\rewardnumfactors \sum_{Z:|Z|=\scopesize} \sqrt{|X[Z_j^r \cup Z]| \sum_{v \in X[Z_j^r \cup Z]} \numvisitsbefore{K+1}{Z_j^r \cup Z}(v) \log \bigl( \frac{\statenumfactors \stateactionfactorsize \statefactorsize T}{\delta} \bigr)}
    \\
    & \lesssim
    \frac{1}{\rewardnumfactors} \sum_{j=1}^\rewardnumfactors \sum_{Z:|Z|=\scopesize} \sqrt{|X[Z_j^r \cup Z]| T \log \bigl( \frac{\statenumfactors \stateactionfactorsize \statefactorsize T}{\delta} \bigr)},
\end{align*}
where the third inequality follows from our construction of the episodes as doubling number of visits to some scope-sized state-action pair (specifically, from Lemma 19 in \cite{AuerUCRL} and Lemma B.18 in \cite{cohen2020ssp}), the forth inequality follows from Jensen's inequality, and the last one because $\sum_{v \in X[Z_j^r \cup Z]} \numvisitsbefore{K+1}{Z_j^r \cup Z}(v) \le T$.

\subsection{Putting Everything Together}

Taking the bounds on all the terms, and noting that the failure event occurs with probability at most $\delta$, gives the following regret bound.
\begin{align*}
    \regret
    & \lesssim
    \sqrt{T \log \frac{T}{\delta}} + D \sqrt{T \log \frac{T}{\delta}} + K D + \frac{1}{\rewardnumfactors} \sum_{j=1}^\rewardnumfactors \sum_{Z:|Z|=\scopesize} \sqrt{|X[Z_j^r \cup Z]| T \log \bigl( \frac{\statenumfactors \stateactionfactorsize \statefactorsize T}{\delta} \bigr)}
    \\
    & \qquad +
    D \sum_{i=1}^\statenumfactors \sum_{Z:|Z|=\scopesize} \sqrt{|X[Z_i^P \cup Z]| |S_i| T \log \bigl( \frac{\statenumfactors \stateactionfactorsize \statefactorsize T}{\delta} \bigr)} 
    \\
    & \qquad + 
    D \sum_{i=1}^\statenumfactors \sum_{Z:|Z|=\scopesize} \sum_{v \in X[Z_i^P \cup Z]} |S_i| \log \bigl( \frac{\statenumfactors \stateactionfactorsize \statefactorsize T}{\delta} \bigr) \log T
    \\
    & \lesssim
    \sum_{i=1}^\statenumfactors \sum_{Z:|Z|=\scopesize} D \sqrt{|X[Z_i^P \cup Z]| |S_i| T \log \bigl( \frac{\statenumfactors \stateactionfactorsize \statefactorsize T}{\delta} \bigr)} 
    \\
    & \qquad + \frac{1}{\rewardnumfactors} \sum_{j=1}^\rewardnumfactors \sum_{Z:|Z|=\scopesize} \sqrt{|X[Z_j^r \cup Z]| T \log \bigl( \frac{\statenumfactors \stateactionfactorsize \statefactorsize T}{\delta} \bigr)}
    \\
    & \qquad +
    \sum_{i=1}^\statenumfactors \sum_{Z:|Z|=\scopesize} D |X[Z_i^P \cup Z]| |S_i| \log^2 \bigl( \frac{\statenumfactors \stateactionfactorsize \statefactorsize T}{\delta} \bigr) 
    \\
    & \qquad +  
    \sum_{Z:|Z|=\scopesize} \sum_{Z':|Z'|=\scopesize} D |X[Z \cup Z']| \log T
    \\
    & \lesssim
    \binom{\stateactionnumfactors}{\scopesize} \statenumfactors D \sqrt{\stateactionfactorsize^2 \statefactorsize T \log \bigl( \frac{\statenumfactors \stateactionfactorsize \statefactorsize T}{\delta} \bigr)} + \binom{\stateactionnumfactors}{\scopesize} \statenumfactors D \stateactionfactorsize^2 \statefactorsize \log^2 \bigl( \frac{\statenumfactors \stateactionfactorsize \statefactorsize T}{\delta} \bigr) 
    \\
    & \qquad + 
    \binom{\stateactionnumfactors}{\scopesize}^2 D \stateactionfactorsize^2 \log T,
\end{align*}
where the second inequality follows because there are at most $\log T$ episodes for each pair of scopes $Z \ne Z'$ of size $\scopesize$ and $v \in X[Z \cup Z']$.

\newpage

\section{The NFA-DORL Algorithm}
\label{sec:appendix-nfa-dorl}

\begin{algorithm}
    \caption{NFA-DORL}
    \label{alg:NFA-DORL}
    \begin{algorithmic}
       
        \STATE {\bfseries Input:} confidence parameter $\delta$, scopes $\{ Z_i^P \}_{i=1}^\statenumfactors$, reward scopes $\{ Z_j^r \}_{j=1}^\rewardnumfactors$, state space $S = \{ S_i \}_{i=1}^\statenumfactors$, action space $A$.
        
        \smallskip
        
        \STATE \textbf{\# Initialization}
        
        \STATE Initialize total visit counters $N$, in-episode visit counters $\nu$ and reward summation variables $r$:
        
            \FOR{$a \in A$ and $j=1,\dots,\rewardnumfactors$ and $v_j \in S[Z_j^r]$ and $i=1,\dots,\statenumfactors$ and $v_i \in S[Z_i^P]$ and $w \in S_i$}
        
                \STATE $r_{j,Z_j^r}(v_j,a) \gets 0, \numvisitsbefore{0}{Z_j^r}(v_j,a) \gets 0, \numvisitsin{0}{Z_j^r}(v_j,a) \gets 0, \numvisitsbefore{0}{i,Z_i^P}(v_i,a,w) \gets 0, \numvisitsin{0}{i,Z_i^P}(v_i,a,w) \gets 0, \numvisitsbefore{0}{Z_i^P}(v_i,a) \gets 0, \numvisitsin{0}{Z_i^P}(v_i,a) \gets 0$.
                
            \ENDFOR
        
        \smallskip
        
        \STATE Initialize time steps counter: $t \gets 1$, and observe initial state $s^1$.
        
        \FOR{$k=1,2,\dots$}
            
            \STATE \textbf{\# Start New Episode}
            
            \STATE Set episode starting time: $t_k \gets t$.
        
         \FOR{$a \in A$ and $j=1,\dots,\rewardnumfactors$ and $v_j \in S[Z_j^r]$ and $i=1,\dots,\statenumfactors$ and $v_i \in S[Z_i^P]$ and $w \in S_i$}

            \STATE Update visit counters: $\numvisitsin{k}{Z_i^P}(v_i,a) \gets 0, \numvisitsin{k}{Z_j^r}(v_j,a) \gets 0, \numvisitsin{k}{i,Z_i^P}(v_i,a,w) \gets 0 , \numvisitsbefore{k}{Z_i^P}(v_i,a) \gets \numvisitsbefore{k-1}{Z_i^P}(v_i,a) + \numvisitsin{k-1}{Z_i^P}(v_i,a) , \numvisitsbefore{k}{Z_j^r}(v_j,a) \gets \numvisitsbefore{k-1}{Z_j^r}(v_j,a) + \numvisitsin{k-1}{Z_j^r}(v_j,a) , \numvisitsbefore{k}{i,Z_i^P}(v_i,a,w) \gets \numvisitsbefore{k-1}{i,Z_i^P}(v_i,a,w) + \numvisitsin{k-1}{i,Z_i^P}(v_i,a,w)$.
                
                \STATE Compute empirical transitions and rewards: 
                \begin{align*}
                    \bar P^k_{i,Z_i^P}(w \mid v_i,a) 
                    =
                    \frac{\numvisitsbefore{k}{i,Z_i^P}(v_i,a,w)}{\max \{ \numvisitsbefore{k}{Z_i^P}(v_i,a) ,1 \}}
                    \quad ; \quad
                    \bar r^k_{j,Z_j^r}(v_j,a) 
                    = 
                    \frac{r_{j,Z_j^r}(v_j,a)}{\max \{N^k_{Z_j^r}(v_j,a),1 \}}.
                \end{align*}
                
                \STATE Set confidence bounds ($\tau^k = \log \frac{6 \statenumfactors \statefactorsize \stateactionfactorsize \episodestarttime{k}}{ \delta}$):
                \begin{align*}
                    \epsilon^k_{i,Z_i^P}(w \mid v_i,a)
                    & =
                    \sqrt{\frac{18 \bar P^k_{i,Z_i^P}(w \mid v_i,a) \tau^k}{\max \{ \numvisitsbefore{k}{Z_i^P}(v_i,a) , 1 \}}} + \frac{18 \tau^k}{\max \{ \numvisitsbefore{k}{Z_i^P}(v_i,a) , 1 \}}
                     \\ \epsilon^k_{Z_j^r}(v_j,a)
                    & =
                    \sqrt{\frac{18 \tau^k}{\max \{ \numvisitsbefore{k}{Z_j^r}(v_j,a) , 1 \}}}
                    \\
                    \confrad^k_{i,Z_i^P}(w \mid v_i,a)
                    & =
                    \min \{ \epsilon^k_{i,Z_i^P}(w \mid v_i,a) , \bar P^k_{i,Z_i^P}(w \mid v_i,a) \}.
                    \end{align*}

        \ENDFOR
            
            \STATE Construct optimistic MDP $\optM{k}$ and compute optimistic policy $\pi^k$ (\cref{alg:comp-opt-nfa-dorl}).
            
            \smallskip
            
            \STATE \textbf{\# Execute Policy}
            
            \WHILE{$\numvisitsin{k}{Z}(s^t[Z],\pi^k(s^t)) < \numvisitsbefore{k}{Z}(s^t[Z],\pi^k(s^t))$ for every $Z \in \{Z_1^P,\dots,Z^P_\statenumfactors,Z_1^r,\dots,Z_\rewardnumfactors^r\}$}
            
                \STATE Play action $a^t = \pi^k(s^t)$, observe next state $s^{t+1}$ and earn reward $r^t = \frac{1}{\rewardnumfactors} \sum_{j=1}^\rewardnumfactors r^t_j$.
                
                \STATE Update in-episode counters and reward summation variables:
                
                    \FOR{$i=1,\dots,\statenumfactors$ and $j=1,\dots,\rewardnumfactors$}
                    
                        \STATE $\numvisitsin{k}{Z_i^P}(s^t[Z_i^P],a^t) \gets \numvisitsin{k}{Z_i^P}(s^t[Z_i^P],a^t) + 1 , \numvisitsin{k}{Z_j^r}(s^t[Z_i^P],a^t) \gets \numvisitsin{k}{Z_j^r}(s^t[Z_j^r],a^t) + 1$.
                        
                        \STATE $\numvisitsin{k}{i,Z_i^P}(s^t[Z_i^P],a^t, s^{t+1}[i]) \gets \numvisitsin{k}{i,Z_i^P}(s^t[Z_i^P],a^t, s^{t+1}[i]) + 1$.
                        
                        \STATE $r_{j,Z}(s^t[Z_j^r],a^t) \gets r_{j,Z_j^r}(s^t[Z_j^r],a^t) + r^t_j$.
                    
                    \ENDFOR
              
                \STATE advance time: $t \gets t+1$.
            \ENDWHILE
        
        \ENDFOR
    \end{algorithmic}
\end{algorithm}

\newpage

\begin{algorithm}
    \caption{NFA-DORL Compute Optimistic Policy $\pi^k$}
    \label{alg:comp-opt-nfa-dorl}
    \begin{algorithmic}
    
        \STATE Construct MDP: $\wt M^k = (\wt S , \wt A , \wt P^k, \tilde r^k)$.
        
        \STATE Define action space: $\wt A = A \cup ( \bigcup_{i=1}^\statenumfactors S_i)$.
        
        \STATE Define state space: $\wt S = S \times \{0,1,\dots,\statenumfactors+1\} \times A \times S \times \{0,1\}$. 
        
        \STATE Define reward function for $j=1,\dots,\rewardnumfactors$:
        \[
            \tilde r^k_j \bigl( (s , h , a' , s', b) , a \bigr) = \begin{cases}
                \min \bigl\{ 1 , \bar r^k_{j,Z_j^r}(s[Z_j^r],a) + \epsilon^k_{Z_j^r}(s[Z_j^r],a) \bigr\}, & b=1,h=0,a \in A
                \\
                0, & otherwise
            \end{cases}
        \]
        
        \STATE Define transition function $\wt P^k \bigl( \tilde s' \mid \tilde s , \tilde a \bigr) = \prod_{\tau=1}^{2 \statenumfactors + 3} \wt P^k_\tau \bigl( \tilde s'[\tau] \mid \tilde s , \tilde a \bigr)$ as follows:
        \begin{itemize}
            \item The counter factor (factor $\statenumfactors+1$) counts deterministically modulo $\statenumfactors+2$.
            
            \item The action factor (factor $\statenumfactors+2$) takes the action played by the agent when the counter is $0$, and otherwise copies the value from the $(\statenumfactors+2)$-th factor of the previous state.
            
            \item The last factor checks that all actions are legal.
            It starts at $1$ and changes to $0$ if the taken action $a$ satisfies (1) $a \not\in A$ when the counter is $0$ ; (2) $a \not\in S_i$ when the counter is $i$ (for $i=1,\dots,\statenumfactors$).
            
            \item For $i=1,\dots,\statenumfactors$, the $i$-th factor is taken from factor $i + 1 + \statenumfactors$ of the previous state when the counter is $\statenumfactors+1$, and otherwise copies the value from the $i$-th factor of the previous state.
            
            \item For $i=1,\dots,\statenumfactors$, the $(i + 2 + \statenumfactors)$-th factor is taken from factor $i + 2 + \statenumfactors$ of the previous state when the counter is not $i$, and otherwise performs the optimistic transition of factor $i$ (if the action is not in $S_i$ transition arbitrarily), i.e.,
            \begin{align*}
                \wt P^k_{i+2+\statenumfactors} \bigl( w_i \mid (s,i,a,s',b) , w \bigr)
                & =
                \bar P^k_{i,Z_i^P} (w_i \mid s[Z_i^P],a) - \confrad^k_{i,Z_i^P}(w_i \mid s[Z_i^P],a) 
                \\
                & \quad + 
                \indevent{w_i = w} \cdot \sum_{w' \in S_i} \confrad^k_{i,Z_i^P}(w' \mid s[Z_i^P],a).
            \end{align*}
        \end{itemize}
        
        \STATE Compute optimal policy $\tilde \pi^k$ of $\optM{k}$ using oracle.
        
        \STATE Extract optimistic policy: $\pi^k(s) = \tilde \pi^k((s,0,\bot))$.
    \end{algorithmic}
\end{algorithm}

\newpage

\section{Proof of \texorpdfstring{\cref{thm:nfa-dorl-regret-bound}}{}}
\label{sec:proof-regret-nfa-dorl}

The proof relies on the MDP $M' = (\wt S, A,P',r')$ (described in \cref{sec:nfa-known}) that models $M$ but stretches each time step to $\statenumfactors +2$ steps.
Given a trajectory $(s^t,a^t)_{t=1,\dots,T}$ in $M$, we map it to a trajectory $(s^{t,h},a^{t,h})_{t=1,\dots,T , h=0,1,\dots,\statenumfactors+1}$ in $M'$ as follows:
\begin{itemize}
    \item $s^{t,0} = (s^t,0,\bot)$ and $a^{t,0} = a^t$.
    
    \item $s^{t,1} = (s^t,1,a^t,\bot)$ and $a^{t,1}$ is arbitrary.
    
    \item $s^{t,i+1} = (s^t,i+1,a^t,s^{t+1}[1],\dots,s^{t+1}[i],\bot)$ for $i=1,\dots,\statenumfactors$ and $a^{t,i+1}$ is arbitrary.
\end{itemize}
Moreover, we slightly abuse notation as follows.
For a policy $\pi$ in $M$, we use the same notation $\pi$ also for the policy in $M'$ that plays according to $\pi$.
That is, $\pi(s^{t,0}) = \pi((s^t,0,\bot)) = \pi(s^t)$ and $\pi(s^{t,h})$ is arbitrary for $h > 0$ as the policy has no effect in these steps.

The failure events for the algorithm are similar to \cref{sec:fail-events}.
Recall that $\lambda^\star(M') = \frac{\lambda^\star(M)}{\statenumfactors+2}$ and therefore we can write:
\begin{align}
    \nonumber
    \regret
    & =
    \sum_{t=1}^T \bigl( \lambda^\star(M) - r^t \bigr)
    \\
    \nonumber
    & =
    \sum_{t=1}^T \bigl( \lambda^\star(M) - r(s^t,a^t) \bigr) + \sum_{t=1}^T \bigl( r(s^t,a^t) - r^t \bigr)
    \\
    \nonumber
    & \le
    \sum_{t=1}^T \bigl( \lambda^\star(M) - r(s^t,a^t) \bigr) + O\Bigl( \sqrt{T \log \frac{T}{\delta}} \Bigr)
    \\
    \nonumber
    & =
    \sum_{t=1}^T \bigl( \frac{\lambda^\star(M)}{\statenumfactors+2} - r(s^t,a^t) \bigr) + \sum_{t=1}^T \sum_{h=1}^{\statenumfactors+1} \bigl( \frac{\lambda^\star(M)}{\statenumfactors+2} - 0 \bigr) + O\Bigl( \sqrt{T \log \frac{T}{\delta}} \Bigr)
    \\
    \nonumber
    & =
    \sum_{t=1}^T
    \sum_{h=0}^{\statenumfactors+1} \bigl( \lambda^\star(M') - r'(s^{t,h},a^{t,h}) \bigr) + O\Bigl( \sqrt{T \log \frac{T}{\delta}} \Bigr)
    \\
    \nonumber
    & =
    \sum_{k=1}^K \sum_{t=\episodestarttime{k}}^{\episodestarttime{k+1}-1}
    \sum_{h=0}^{\statenumfactors+1} \bigl( \lambda^\star(M') - r'(s^{t,h},\pi^k(s^{t,h})) \bigr) + O\Bigl( \sqrt{T \log \frac{T}{\delta}} \Bigr)
    \\
    \nonumber
    & \le
    \sum_{k=1}^K \sum_{t=\episodestarttime{k}}^{\episodestarttime{k+1}-1}
    \sum_{h=0}^{\statenumfactors+1} \bigl( \lambda^\star(\wt M^k) - r'(s^{t,h},\pi^k(s^{t,h})) \bigr) + O\Bigl( \sqrt{T \log \frac{T}{\delta}} \Bigr)
    \\
    \label{eq:nfa-known-reg-decomp-P}
    & =
    \sum_{k=1}^K \sum_{t=\episodestarttime{k}}^{\episodestarttime{k+1}-1}
    \sum_{h=0}^{\statenumfactors+1} \bigl( \lambda^\star(\wt M^k) - \wt r^k(s^{t,h},\pi^k(s^{t,h})) \bigr) 
    \\
    \label{eq:nfa-known-reg-decomp-r}
    & \qquad + 
    \sum_{k=1}^K \sum_{t=\episodestarttime{k}}^{\episodestarttime{k+1}-1}
    \sum_{h=0}^{\statenumfactors+1} \bigl( \wt r^k(s^{t,h},\pi^k(s^{t,h})) - r'(s^{t,h},\pi^k(s^{t,h})) \bigr) 
    \\
    \nonumber
    & \qquad + 
    O\Bigl( \sqrt{T \log \frac{T}{\delta}} \Bigr),
\end{align}
where the last inequality is by optimism which is proven similarly to \cref{sec:analysis-opt}.

\paragraph{Term~\eqref{eq:nfa-known-reg-decomp-r}.} 
Notice that the reward is zero when the counter is not $0$ and therefore
\begin{align*}
    \eqref{eq:nfa-known-reg-decomp-r}
    & =
    \sum_{k=1}^K \sum_{t=\episodestarttime{k}}^{\episodestarttime{k+1}-1} \bigl( \wt r^k(s^{t,0},\pi^k(s^{t,0})) - r'(s^{t,0},\pi^k(s^{t,0})) \bigr)
    \\
    & \le
    \frac{1}{\rewardnumfactors} \sum_{k=1}^K \sum_{s \in S} \sum_{j=1}^\rewardnumfactors \numvisitsin{k}{}(s) \bigl( \bar r^k_{j,Z_j^r}(s[Z_j^r],\pi^k(s)) - r_j(s[Z_j^r],\pi^k(s)) + \epsilon^k_{Z_j^r}(s[Z_j^r],\pi^k(s)) \bigr)
    \\
    & \le
    \frac{1}{\rewardnumfactors} \sum_{k=1}^K \sum_{s \in S} \sum_{j=1}^\rewardnumfactors \numvisitsin{k}{}(s) \cdot 2 \epsilon^k_{Z_j^r}(s[Z_j^r],\pi^k(s))
    \\
    & \lesssim
    \frac{1}{\rewardnumfactors} \sum_{k=1}^K \sum_{j=1}^\rewardnumfactors \sum_{v \in S[Z_j^r]} \sum_{a \in A} \numvisitsin{k}{Z_j^r}(v,a) \sqrt{\frac{\log \frac{\statenumfactors \statefactorsize \stateactionfactorsize T}{\delta}}{\max \{ \numvisitsbefore{k}{Z_j^r}(v,a) , 1 \}}}
    \\
    & \lesssim
    \frac{1}{\rewardnumfactors} \sum_{j=1}^\rewardnumfactors \sqrt{|S[Z_j^r]| |A| T \log \frac{\statenumfactors \statefactorsize \stateactionfactorsize T}{\delta}}.
\end{align*}

\paragraph{Term~\eqref{eq:nfa-known-reg-decomp-P}.}
By the Bellman equations in the optimistic model $\optM{k}$, we can write term~\eqref{eq:nfa-known-reg-decomp-P} as follows
\begin{align*}
    \eqref{eq:nfa-known-reg-decomp-P}
    & =
    \sum_{k=1}^K \sum_{t=\episodestarttime{k}}^{\episodestarttime{k+1}-1}
    \sum_{h=0}^{\statenumfactors+1} \bigl( \sum_{s' \in \wt S} \wt P^k(s' \mid s^{t,h} , \pi^k(s^{t,h})) h^k(s') - h^k(s^{t,h}) \bigr)
    \\
    & =
    \sum_{k=1}^K \sum_{t=\episodestarttime{k}}^{\episodestarttime{k+1}-1}
    \sum_{h=0}^{\statenumfactors+1} \sum_{s' \in \wt S} \bigl( \wt P^k(s' \mid s^{t,h} , \pi^k(s^{t,h})) - P'(s' \mid s^{t,h} , \pi^k(s^{t,h}))  \bigr) h^k(s')
    \\
    & \qquad +
    \sum_{t=\episodestarttime{k}}^{\episodestarttime{k+1}-1}
    \sum_{h=0}^{\statenumfactors+1} \bigl( \sum_{s' \in \wt S} P'(s' \mid s^{t,h} , \pi^k(s^{t,h})) h^k(s') - h^k(s^{t,h})  \bigr)
    \\
    & \lesssim
    D \sum_{k=1}^K \sum_{s \in S} \sum_{i=1}^\statenumfactors \sum_{w \in S_i} \numvisitsin{k}{}(s) \epsilon^k_{i,Z_i^P}(s[Z_i^P], \pi^k(s), w)
    \\
    & \qquad +
    \sum_{t=\episodestarttime{k}}^{\episodestarttime{k+1}-1}
    \sum_{h=0}^{\statenumfactors+1} \bigl( \sum_{s' \in \wt S} P'(s' \mid s^{t,h} , \pi^k(s^{t,h})) h^k(s') - h^k(s^{t,h})  \bigr)
    \\
    & \lesssim
    D \sum_{k=1}^K \sum_{i=1}^\statenumfactors \sum_{v \in S[Z_i^P]} \sum_{a \in A} \numvisitsin{k}{Z_i^P}(v,a) \Bigl( \sqrt{\frac{|S_i| \log \frac{\statenumfactors \statefactorsize \stateactionfactorsize T}{\delta}}{\max \{ \numvisitsbefore{k}{Z_i^P}(v,a) , 1 \}}} + \frac{|S_i| \log \frac{\statenumfactors \statefactorsize \stateactionfactorsize T}{\delta}}{\max \{ \numvisitsbefore{k}{Z_i^P}(v,a) , 1 \}} \Bigr)
    \\
    & \qquad +
    K D + D \sqrt{\statenumfactors T \log \frac{\statenumfactors T}{\delta}}
    \\
    & \lesssim
    \sum_{i=1}^\statenumfactors D \sqrt{|S_i| |S[Z_i^P]| |A| T \log \frac{\statenumfactors \statefactorsize \stateactionfactorsize T}{\delta}} + \sum_{i=1}^\statenumfactors D |S_i| |S[Z_i^P]| |A| \log^2 \frac{\statenumfactors \statefactorsize \stateactionfactorsize T}{\delta}.
\end{align*}
The first inequality follows by the definition of $P'$ and $\wt P^k$ and their factored structure.
The second inequality is similar to \cref{sec:analysis-dev}, while noting that the bias function in $\optM{k}$ is bounded by $D$.
The reason is that diameter of $\optM{k}$ is $D(\statenumfactors+2)$, and that the bias function is always bounded by the diameter times the optimal gain (see \citet{bartlett2009regal}).

\newpage

\section{Factored MDPs with Non-Factored Actions and Unkown Structure}
\label{sec:nfa-slf-ucrl}

We now adjust our SLF-UCRL algorithm to cope with non-factored actions.
The idea is similar to \cref{sec:nfa-known} -- instead of choosing a factored action that contains the actual action and the optimistic choices for all the consistent scopes, this time step will be stretched across $2 + \statenumfactors (\scopesize + 1)$ steps in which the policy makes its choice sequentially.
In the first step the policy picks the action, in steps $i (\scopesize + 1) - \scopesize$ to $i (\scopesize + 1) -1$ it picks a consistent scope for factor $i$, step $i(\scopesize + 1)$ performs the optimistic transition of the $i$-th factor, and the last step completes the transition.

Thus, the action space of the optimistic MDP $\optM{k}$ is $\wt A = A \cup (\bigcup_{i=1}^\statenumfactors S_i) \cup \{1,\dots,\statenumfactors\}$ of size $\max \{ |A| , \statefactorsize , \statenumfactors \}$ compared to $|A| \statefactorsize^\statenumfactors \stateactionnumfactors^\statenumfactors$ in our original construction.
Moreover, the state space is $\wt S = S \times \{0,1,\dots, \statenumfactors (\scopesize + 1) + 1\} \times A \times \{1,\dots,\statenumfactors\}^\scopesize \times S \times \{0,1\}$, which is similar to \cref{sec:nfa-known} up to the new factors $\{1,\dots,\statenumfactors\}^\scopesize$ that keep the chosen scope.

As in \cref{sec:nfa-known}, a state $s$ is mapped to $(s,0,\bot)$ and taking action $a \in A$ transitions to $(s,1,a,\bot)$ while other actions are not legal.
When the counter is between $i (\scopesize+1) -\scopesize$ and $i (\scopesize+1) -1$ the legal actions are $\{1,\dots,\statenumfactors\}$ and the chosen indices are just stored in the state (denote them by $Z$).
Then, the legal actions in state $(s,i (\scopesize+1) , a , Z , w_1 , \dots , w_{i-1} , \bot)$ are $S_i$, and picking action $w \in S_i$ transitions to $(s,i (\scopesize+1) + 1,a,Z,w_1,\dots,w_{i-1},w_i,\bot)$ with probability
\begin{align*}
    \bar P^k_{i,Z} & (w_i \mid s[Z],a) - \confrad^k_{i,Z}(w_i \mid s[Z],a) 
    + 
    \indevent{w_i = w} \cdot \sum_{w' \in S_i} \confrad^k_{i,Z}(w' \mid s[Z],a).
\end{align*}
At this point the validating bit also checks that $Z$ is consistent for factor $i$, and turns to $0$ if not.
Finally, we transition from $(s,\statenumfactors(\scopesize+1) + 1,a,Z',w_1,\dots,w_\statenumfactors,b)$ deterministically to $(s',0,\bot)$, where $s' = (w_1,\dots,w_\statenumfactors) \in S$.

Just like \cref{sec:from-optimistic-to-factored}, the transition function of $\optM{k}$ is no longer factored because some scopes include the entire state-action space.
However, as we previously showed, we can overcome this and perform the optimistic transition according to a selected scope while maintaining small scope size by constructing the FMDP $\widehat M^k$ with a ``temporary'' work space $\Omega^\scopesize$, where $\Omega = \omega^\stateactionnumfactors \times \omega^{\stateactionnumfactors / 2} \times \dots \times \omega^2 \times \omega$.
Notice that it is much smaller now because we are not performing the transition for all $\statenumfactors$ factors simultaneously.
Thus, the oracle needs to solve an FMDP with scope size $\scopesize + 4$, number of factors $2 \statenumfactors + \scopesize + 3 + 2 \stateactionnumfactors \scopesize$, size of each factor bounded by $\max \{ \statefactorsize , |A| , \statenumfactors(\scopesize + 1) + 2 , \stateactionnumfactors \}$ and small number of actions.

Finally, a similar construction to \cref{sec:nfa-known} can be used in order to bound the regret.
It involves the MDP $M'$ with state space $\wt S$, that stretches each time step of $M$ for $2 + \statenumfactors (\scopesize + 1)$ steps but models the exact same process as $M$.

\begin{theorem}
    \label{thm:nfa-slf-ucrl-regret-bound}
    Running NFA-SLF-UCRL on a factored MDP with non-factored actions and unknown structure ensures, with probability at least $1 - \delta$,
    \begin{align*}
        \regret
        & =
        \wt O \biggl( \sum_{i=1}^\statenumfactors \sum_{Z : |Z| = \scopesize} D \sqrt{ |S_i| |S[Z_i^P \cup Z]| |A| T} 
        + \frac{1}{\rewardnumfactors} \sum_{j=1}^\rewardnumfactors \sum_{Z : |Z| = \scopesize} \sqrt{|S[Z_j^r \cup Z]| |A| T} \biggr).
    \end{align*}
\end{theorem}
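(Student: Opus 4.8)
The plan is to mirror the proof of \cref{thm:nfa-dorl-regret-bound} for the non-factored-action machinery, while injecting the consistent-scopes deviation argument from the proof of \cref{thm:slf-ucrl-regret-bound} to account for the unknown structure. First I would set up the auxiliary MDP $M'$ over the state space $\wt S = S \times \{0,\dots,\statenumfactors(\scopesize+1)+1\} \times A \times \{1,\dots,\statenumfactors\}^\scopesize \times S \times \{0,1\}$ described in \cref{sec:nfa-slf-ucrl}, which models the exact same process as $M$ but stretches each time step over $2 + \statenumfactors(\scopesize+1)$ substeps. As in \cref{sec:nfa-known}, playing $\pi$ in $M$ is equivalent to playing the lifted policy in $M'$, so $\lambda^\star(M') = \lambda^\star(M)/(2+\statenumfactors(\scopesize+1))$, and the realized trajectory in $M$ maps to a trajectory in $M'$. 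I would then reuse the failure events of \cref{sec:fail-events}, now with the empirical estimates and confidence radii defined over contexts in $S[Z]\times A$ as in \cref{alg:comp-opt-nfa-dorl}, and recall that outside the failure event the true scopes $Z_i^P$ and $Z_j^r$ are always consistent, hence belong to $\consscopes{k}{i}$ and $\rewardconsscopes{k}{j}$.

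Next I would carry out the regret decomposition exactly as in the proof of \cref{thm:nfa-dorl-regret-bound}: peel off an Azuma term for the gap between realized and expected rewards, rewrite the remaining sum in terms of $M'$ using the normalization above, and split it into an optimism term $\sum_k\sum_t\sum_h(\lambda^\star(M') - \optlambda{k})$ and two deviation terms (transitions and rewards), where $\optlambda{k}=\lambda^\star(\optM{k})$. Optimism $\optlambda{k}\ge\lambda^\star(M')$ follows verbatim from \cref{lem:optimism} and \cref{sec:analysis-opt}: since the true scopes are consistent, the optimistic policy that selects them, the best direction $s^\star=\argmax_s h^k(s)$, and the true action dominates the gain of $\pi^\star$ in $M'$. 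I would also record that the bias $h^k$ of $\optM{k}$ is bounded by $D$, since the diameter of $\optM{k}$ is $O(D\statenumfactors\scopesize)$ while the optimal gain is $\lambda^\star(M')=O(1/(\statenumfactors\scopesize))$, and the bias is at most diameter times optimal gain (as in \citet{bartlett2009regal} and the proof of \cref{thm:nfa-dorl-regret-bound}).

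The core work is the transition deviation. Via the Bellman equations in $\optM{k}$ and the factorization of $\wt P^k$, I would reduce it to bounding $D\sum_k\sum_s \numvisitsin{k}{}(s)\sum_{i=1}^\statenumfactors \lVert \wt P^k_i(\cdot\mid\tilde x)-P_i(\cdot\mid s[Z_i^P],a)\rVert_1$ along the trajectory. For each factor $i$ I would insert the empirical functions $\bar P^k_{i,Z^k_i}$ and $\bar P^k_{i,Z_i^P}$ and apply the triangle inequality, exactly as in \eqref{eq:P-dist-conf-set}--\eqref{eq:P-dist-good-event}: the optimistic-versus-empirical gap is controlled by the construction, the empirical-at-true-scope-versus-true gap by event $F^P$, and the gap between the empirical at the chosen scope $Z^k_i$ and at the true scope $Z_i^P$ by the fact that both are consistent (so both are close to the joint estimate $\bar P^k_{i,Z_i^P\cup Z^k_i}$) through \eqref{eq:consistent-def}. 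This yields a per-step bound of order $\sum_i\sum_{w\in S_i}\epsilon^k_{i,Z_i^P\cup Z^k_i}(w\mid\cdot)\lesssim \sum_i\sqrt{|S_i|/\numvisitsbefore{k}{Z_i^P\cup Z^k_i}(\cdot)}$. Summing over the doubling episodes (Lemma 19 of \cite{AuerUCRL} and Lemma B.18 of \cite{cohen2020ssp}) and applying Jensen as in \cref{sec:analysis-dev}, but now counting visits to pairs $(v,a)\in S[Z_i^P\cup Z]\times A$, produces the leading term $\sum_i\sum_{Z:|Z|=\scopesize} D\sqrt{|S_i|\,|S[Z_i^P\cup Z]|\,|A|\,T}$; the extra $|A|$ factor is precisely what replaces the factored-action contribution. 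An identical argument on the reward factors, using reward consistency of $z^k_j$, gives the second summand $\frac{1}{\rewardnumfactors}\sum_j\sum_{Z:|Z|=\scopesize}\sqrt{|S[Z_j^r\cup Z]|\,|A|\,T}$, and combining everything establishes the bound.

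The main obstacle I anticipate is making the deviation bound legitimate despite the extra ``power'' of the optimistic policy in $\optM{k}$, which selects both the scope $Z^k_i$ and the uncertainty direction for factor $i$ only after the realizations of factors $1,\dots,i-1$ are revealed. As in \cref{sec:nfa-known}, the resolution is that the true action is committed in the first substep and fixed thereafter, so the comparison against $M'$ (which also stretches the step but keeps the true dynamics) remains valid factor-by-factor; the scope-consistency step then absorbs the additional freedom of choosing $Z^k_i$ adaptively. Verifying that the doubling-episode counting couples correctly with the non-factored action index $a$ inside the joint scope $Z_i^P\cup Z$ is the delicate bookkeeping, but it follows by carefully merging the two existing proofs.
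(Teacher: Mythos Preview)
Your proposal is correct and follows essentially the same approach as the paper, which only sketches the proof by saying that ``a similar construction to \cref{sec:nfa-known} can be used'' via the stretched MDP $M'$ over $2+\statenumfactors(\scopesize+1)$ substeps. You have in fact fleshed out more detail than the paper provides, correctly merging the $M'$-based decomposition of \cref{sec:proof-regret-nfa-dorl} with the consistent-scopes deviation bound from \cref{sec:analysis-dev}, including the handling of the adaptive scope choice and the $|A|$ factor in the visit counts.
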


\newpage

\section{Lower Bound}
\label{app:lower-bound}

We associate an independent multi-arm bandit (MAB) problem to every tuple $(i,w_1,\dots,w_m) \in \{1,\dots,d\} \times \{1,\dots,W\}^m$.
Without loss of generality we assume that the rewards of all the MABs are either $0$ or $1$.

Now we construct the following factored MDP $M = (S,A,P,R)$, where the state space is $S = \{0,1,\dots,\log d + 1\} \times \{0,1\}^{\log d} \times \{0,1,\dots,W\}^d \times \{0,1\}^d \times \{0,1\}^{d/2} \times \dots \times \{0,1\}^4 \times \{0,1\}^2$, and the action space is non-factored of size $|A|$.
Note that the state space has $3d + \log d$ factors with maximal size $\max\{ W+1 , \log d + 2 \}$.

The idea is to split the $T$ time steps into blocks of $2 + \log d$ steps.
In each block the agent faces a randomly chosen MAB problem (out of the $d W^m$ independent MABs).
We make sure that it cannot infer anything about the different MABs, and thus must solve them sequentially.
Since the $t$ steps lower bound for each MAB is $\Omega(\sqrt{|A| t})$, and the expected number of times that the agent faces each MAB is $\frac{T}{d W^m (2 + \log d)}$, the total regret is
\[
    \Omega \Bigl( \sum_{i=1}^d \sum_{v \in \{1,\dots,W\}^m} \sqrt{|A| \frac{T}{d W^m (2 + \log d)}} \Bigr)
    =
    \Omega \Bigl( \sqrt{ \frac{d}{\log d} W^m |A| T} \Bigr).
\]
We do not make the full formal argument about the relation between the lower bound and the expected number of times we encounter each MAB, but it can be found in the lower bound proof of \citet{cohen2020ssp} for example.

We now continue to define the FMDP that makes the agent face the MABs sequentially.
There is only one reward factor.
Its scope is the last two bits and the first factor (the counter).
It gives a reward of $1$ only when the counter is $\log d + 1$ and the last two bits contain a $1$.
Otherwise the reward is $0$.

The transition function is defined as follows:
\begin{itemize}
    \item The first factor is called the counter factor. 
    It counts deterministically modulo $\log d +2$.
    
    \item The next $\log d$ bits are called the location bits, and they determine the location of the MAB within the state.
    Each bit $j$ of these $\log d$ location bits is simply changing uniformly at random, i.e., becomes $0$ or $1$ with probability $1/2$.
    
    \item The next $d$ factors are called the value factors, and they give the MAB instance that is encountered by the agent at this time block.
    The transitions for the $i$-th value factor are defined as follows.
    When the counter is $0$ denote by $x \in \{1,\dots,d\}$ the integer that the $\log d$ location bits represent.
    If $x \le i < x+m$ this factor is chosen uniformly at random from $\{1,\dots,W\}$ and otherwise it is $0$.
    When the counter is larger than $0$ this factor is just $0$.
    Note that the scope size for these factors is $\log d + 1$.
    
    \item The next $d$ bits are called the reward bits, and they represent the rewards given by the MABs.
    The transitions of the $j$-th reward bit is defined as follows.
    When the counter is $1$ denote by $(w_1,\dots,w_m)$ the values of factors $j$ to $j+m-1$ of the $d$ value factors.
    If one of them is $0$ than the $j$-th reward bit is zero, and otherwise its value is determined by the reward of MAB $(j,w_1,\dots,w_m)$.
    When the counter is not $1$ this factor is just $0$.
    Note that the scope size for this factor is $m+1$.
    Moreover, this is the only MAB instance that the agent gets any information about, which forces it to solve all the MABs sequentially.
    
    \item The final bits $\{0,1\}^{d/2} \times \dots \times \{0,1\}^4 \times \{0,1\}^2$ take the $d$ reward bits and extract whether they contain a $1$ or are all $0$.
    Notice that this encodes exactly the reward given by the current MAB.
    Similarly to the SLF-UCRL algorithm, this can be achieved with scope size $3$ (each bit needs to consider two bits from the previous layer and the counter) and within $\log d - 1$ steps.
    This is done when the counter is $2,\dots,\log d$ and then the last two bits contain a $1$ if the answer is yes, and are both $0$ if the answer is no.
\end{itemize}

\begin{remark*}[Dependence in the diameter]
    Our main goal in the lower bound was to show that polynomial dependence in the number of factors and exponential dependence in the scope size are necessary.
    This was not clear from previous lower bounds as they used scopes of size $1$, and did not have a dependence on $d$ (because there was an average over factors).
    Therefore, we did not get a dependence on the diameter $D$.
    While getting the dependence in $D$ might be tricky in the average-reward setting, it is straightforward to get a $\sqrt{H}$ dependence in the finite-horizon setting (with horizon $H$).
    In the finite-horizon setting our construction is similar such that in each episode one MAB is faced and the agent earns the same reward for $H - (\log d+2)$ steps (after the reward is chosen in the first $\log d + 2$ steps, the agent has no control and just keeps receiving the same reward).
    This gives a lower bound of $\Omega \bigl( \sqrt{\frac{d}{\log d} H W^m |A| T} \bigr)$ that matches the upper bound of \citet{chen2020efficient} (up to logarithmic factors), thus proving that this is indeed the minimax optimal regret.
\end{remark*}

\newpage

\section{Experiments}
\label{sec:experiments-appendix}

The code is available here:

\url{https://github.com/avivros007/Factored-MDP-with-Unknown-Structure}.

We perform numerical experiments to support our theoretical claims regarding the SLF-UCRL algorithm.
The experiments are performed on the \emph{SysAdmin} domain \citep{guestrin2003efficient}.
This domain consists of $N$ servers that are organized in a graph with a certain topology. 
Each server is represented by a binary variable that indicates whether or not it is working.
At each time step, each server has a chance of failing, which depends on its own status and the status of the
servers connected to it. 
There are $N + 1$ actions: $N$ actions for rebooting a server (after which it works with high probability) and an idle action.
In previous work \citep{guestrin2003efficient,xu2020near,talebi2020improved}, researchers have performed experiments with two different topologies: A circular topology in which each server is connected to the next server in the circle, and a star topology in which the servers are organized in a tree with three branches.

In each topology, the status of each server depends on at
most one other server (and its own status and the action) so the scope size is $m = 3$.
The number of state factors is $d = N$, the size of each state factor is $W = 2$, the action space is of size $|A| = N+1$.
Thus the state-action space is of total size $|S \times A| = 2^N (N+1)$ which is exponential in the number of servers $N$, while the representation of this FMDP is only polynomial in $N$.

In our experiments, we set $\delta = 0.01$ and report for each domain the average results over $10$ independent experiments (and the standard error in the shaded area).
Our code is based on the code of \citet{talebi2020improved} which was made publicly available via \url{https://github.com/aig-upf/dbn-ucrl}.
To that code we added a new class called \verb|SLFUCRL| that implements our algorithm, i.e., maintains sets of consistent scopes (we focus on transitions and assume that the reward scopes are known) and integrates them within the optimistic policy computation.
For the planning oracle, we simply solve the full optimistic MDP using extended value iteration (up to some error).
We note that for finite-horizon we could solve the optimistic MDP exactly.

Figure~\ref{fig:exp-supp} shows that in a variety of scenarios the SLF-UCRL algorithm acts as predicted by our theoretical guarantees.
In (a),(b),(c) we used the circular topology with $N=4,5,6$ servers, respectively, and in (d) we used the star topology with $N=4$ servers.
We can see that SLF-UCRL eliminates the wrong scopes, and that its regret is comparable to that of the Factored-UCRL algorithm \citep{osband2014near} that has full knowledge of the factored structure in advance.
Moreover, the regret of SLF-UCRL is significantly better than that of the UCRL algorithm \citep{AuerUCRL} that simply ignores the existence of a factored structure, demonstrating the importance of learning the structure (as the SLF-UCRL algorithm does).
``SLF-UCRL$i$'' refers to $i$ factors whose scope needs to be learned, demonstrating that additional domain knowledge can be easily integrated into the SLF-UCRL algorithm and help it both in terms of regret and in terms of computational complexity (which does not appear in the graphs).

Note that for experiment (a) we used a slightly stricter threshold (by a factor of $10$) to eliminate inconsistent scopes, but then we saw that we can eliminate them faster without eliminating the true scopes.
This is why it takes $20000$ steps (and not $15000$) to eliminate all scopes in experiment (a).

\begin{figure}
    \centering
    \begin{tabular}{cc}
        \includegraphics[width=6.5cm,height=2.5cm]{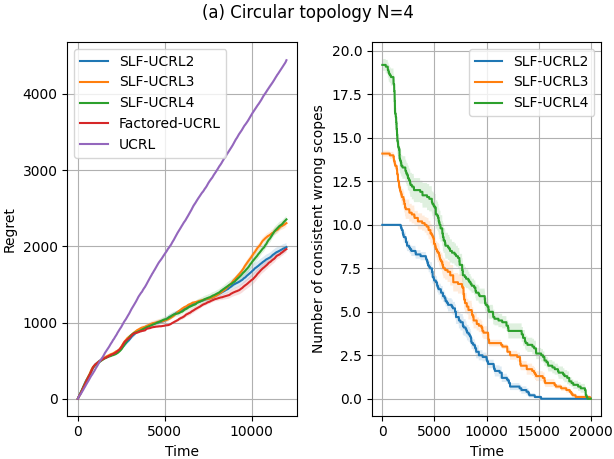}
        &
        \includegraphics[width=6.5cm,height=2.5cm]{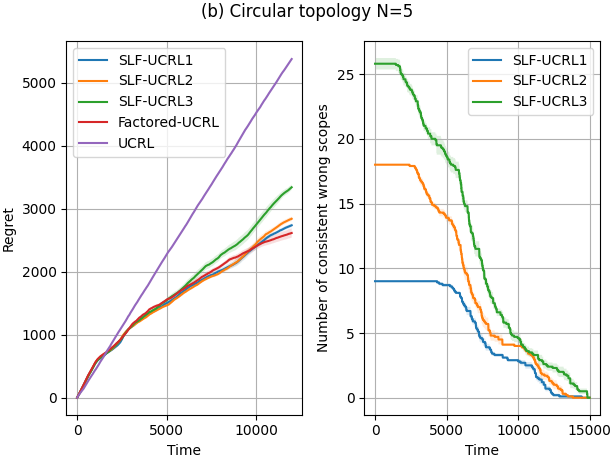}
        \\
        \includegraphics[width=6.5cm,height=2.5cm]{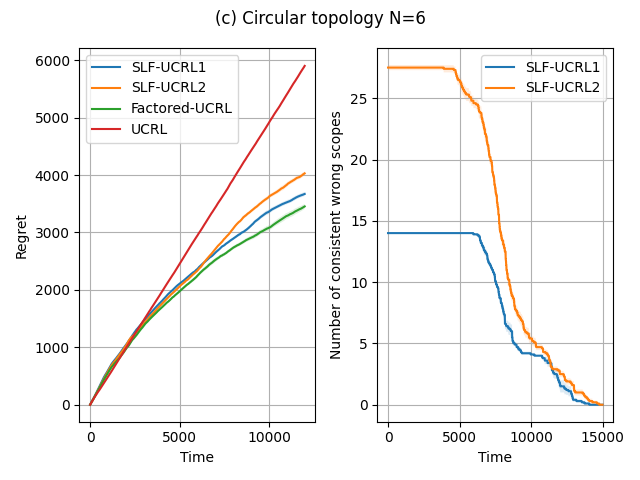}
        &
        \includegraphics[width=6.5cm,height=2.5cm]{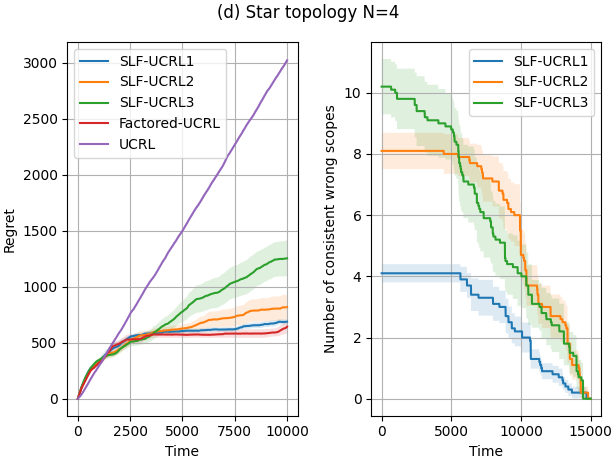}
    \end{tabular}
    \caption{SLF-UCRL performance on \emph{SysAdmin} domain.}
    \label{fig:exp-supp}
\end{figure}

\end{document}